\documentclass[a4paper]{article}
\usepackage[affil-it]{authblk}
\usepackage[usenames,dvipsnames]{xcolor}
\usepackage{amsfonts}
\usepackage{amsmath,amsthm,amssymb,dsfont,pifont}
\usepackage{enumerate}
\usepackage[english]{babel}
\usepackage{graphicx}	
\usepackage{subcaption}
\usepackage[margin=3cm]{geometry}
\usepackage{url}
\usepackage{todonotes}
\usepackage{bbm}
\usepackage{caption}
\usepackage{subcaption}
\usepackage{tikz}
\usepackage{makecell}
\usepackage{cite}

\usepackage{epsfig}
\usepackage{pgfplots}

\usepackage{hyperref}[breaklinks]
\hypersetup{colorlinks=true,citecolor=blue,linkcolor=blue,filecolor=blue,urlcolor=blue}

\usepackage{nicefrac}
\usepackage{mathtools}

\usepackage{algorithm}
\usepackage{algorithmic}

 
\theoremstyle{plain}
\newtheorem{theorem}{Theorem}
\newtheorem{lemma}[theorem]{Lemma}

\newtheorem{proposition}[theorem]{Proposition}

\theoremstyle{definition}
\newtheorem{definition}[theorem]{Definition}
\newtheorem{remark}[theorem]{Remark}

\newcommand*{\PP}{\mathbb{P}}
\newcommand*{\E}{\mathbb{E}}
\newcommand*{\ee}{\mathrm{e}}

\newcommand*{\cA}{\mathcal{A}}
\newcommand*{\cB}{\mathcal{B}}

\newcommand*{\cH}{\mathcal{H}}

\newcommand*{\cN}{\mathcal{N}}
\newcommand*{\cM}{\mathcal{M}}

\newcommand*{\cS}{\mathcal{S}}

\newcommand*{\cX}{\mathcal{X}}
\newcommand*{\cY}{\mathcal{Y}}

\newcommand*{\N}{\mathbb{N}}

\newcommand*{\R}{\mathbb{R}}

\newcommand*{\eps}{\varepsilon}

\newcommand*{\rank}{\mathrm{rank}}

\newcommand*{\id}{\mathrm{id}}

\newcommand*{\tr}{\mathrm{tr}}



\newcommand*{\di}{\mathrm{d}} 

\newcommand{\norm}[1]{\left\lVert#1\right\rVert}
\newcommand*{\trans}{\!\mathsf{T}}

\definecolor{mylightgreen}{RGB}{219,255,192}
\definecolor{mylightblue}{RGB}{240,255,252}
\definecolor{mylightyellow}{RGB}{255,248,180}
\definecolor{mylightorange}{RGB}{252,248,236}
\definecolor{mygraywhite}{RGB}{243, 243,243}
\definecolor{mylightred}{RGB}{255, 209,192}
\definecolor{mylightpink}{RGB}{255,240,252}

\interfootnotelinepenalty=10000     

\allowdisplaybreaks
\title{Effective dimension of machine learning models}

\author{\normalsize Amira Abbas$^{1,2}$\thanks{amira.mahomed.abbas@ibm.com}\,\,, David Sutter$^{1}$, Alessio Figalli$^{3}$, and Stefan Woerner$^{1}$}
 \affil{\small $^{1}$IBM Quantum, IBM Research -- Zurich\\
 $^{2}$University of KwaZulu-Natal, Durban\\
 $^{3}$Department of Mathematics, ETH Zurich
}
\date{}

\begin{document}
\maketitle

\begin{abstract}
Making statements about the performance of trained models on tasks involving new data is one of the primary goals of machine learning, i.e., to understand the generalization power of a model. Various capacity measures try to capture this ability, but usually fall short in explaining important characteristics of models that we observe in practice. In this study, we propose the local effective dimension as a capacity measure which seems to correlate well with generalization error on standard data sets. Importantly, we prove that the local effective dimension bounds the generalization error and discuss the aptness of this capacity measure for machine learning models.
\end{abstract}
\vspace{6mm}
\maketitle

\section{Introduction}
The essence of successful machine learning lies in the creation of a model that is able to learn from data and apply what it has learned to new, unseen data~\cite{goodfellow2016machine}. The latter ability is termed the \emph{generalization performance} of a machine learning model and has proven to be notoriously difficult to predict a priori~\cite{zhang2017understanding}. The relevance of generalization is rather straightforward: if one already has insight on the performance capability of a model class, this will allow for more robust models to be selected for training and deployment. But how does one begin to analyze generalization without physically training models and assessing their performance on new data thereafter? 

This age-old question has a rich history and is largely addressed through the notion of capacity. Loosely speaking, the capacity of a model relates to its ability to express a variety of functions~\cite{vapnik1994measuring}. The higher a model's capacity, the more functions it is able to fit. In the context of generalization, many capacity measures have been shown to mathematically bound the error a model makes when performing a task on new data, i.e.~the \emph{generalization error}~\cite{VC_dimension71, fisherraonorm, bartlett2017spectrally}. Naturally, finding a capacity measure that provides a tight generalization error bound, and in particular, correlates with generalization error across a wide range of experimental setups, will allow us to better understand the generalization performance of machine learning models.

Interestingly, through time, proposed capacity measures have differed quite substantially, with trade-offs apparent among each of the current proposals~\cite{jiang2019fantastic}. The perennial VC dimension has been famously shown to bound the generalization error, but it does not incorporate crucial attributes, such as data potentially coming from a distribution, and ignores the learning algorithm employed which inherently reduces the space of models within a model class that an algorithm has access to~\cite{vapnik1994measuring}. Arguably, one of the most promising contenders for capacity which attempts to incorporate these factors are norm-based capacity measures, which regularize the margin distribution of a model by a particular norm that usually depends on the model's trained parameters~\cite{bartlett2017spectrally, neyshabur2017pac, pmlr-v40-Neyshabur15}. Whilst these measures incorporate the distribution of data, as well as the learning algorithm, the drawback is that most depend on the size of the model, which does not necessarily correlate with the generalization error in certain experimental setups~\cite{zhang2017understanding}.

To this end, we present the \emph{local effective dimension} which attempts to address these issues. By capturing the redundancy of parameters in a model, the local effective dimension is modified from~\cite{Figalli20,abbas21} to incorporate the learning algorithm employed, in addition to being scale invariant and data dependent. The key results from our study can be summarized as follows:
\begin{itemize}
    \item The local effective dimension, unlike its predecessors, includes training dependence as well as other desirable properties summarized in Table~\ref{table_overview}.
    \item We prove that the local effective dimension bounds the generalization error of a trained model with finite data (see Theorem~\ref{thm_generalization_bound}). 
    \item The local effective dimension largely depends on the Fisher information, which is often approximated in practice~\cite{kunstner2019limitations}. We rigorously quantify the sensitivity of the local effective dimension when evaluated with an approximated Fisher information (see Proposition~\ref{prop_continuity_ED}). 
    \item Lastly, we empirically show that the local effective dimension correlates well with generalization error in various experimental setups using standard data sets. The local effective dimension is found to decrease in line with the generalization error as a network increases in size. Similarly, the measure increases in line with the generalization error when models are trained on randomized training labels.

\end{itemize}
\begin{table}[!htb]
    \centering
    
    
    \begin{tabular}{l|c@{\hskip 0.75mm} c@{\hskip 3mm} c@{\hskip 3mm} c@{\hskip 3mm} c@{\hskip 3mm} c}
      & VC-       & Rademacher  & Margin-  & Norm-  & Sharpness-  & Local\\
    & dimension  & complexity & based   & based    & based        & ED  \\ \hline
1.~Generalization bound    & \checkmark &\checkmark&  \checkmark & \checkmark & \checkmark &  \checkmark\\     
2.~Correlation generalization  & \ding{55} & \ding{55} & \ding{55} & \ding{55} & \ding{55} & \checkmark  \\
3.~Scale invariant             & \ding{55} & \checkmark& \ding{55} & \ding{55}& \ding{55}&   \checkmark \\
4.~Data dependent              & \ding{55} & \checkmark& \checkmark & \checkmark & \checkmark &   \checkmark \\
5.~Training dependent       & \ding{55}& \ding{55} & \checkmark & \checkmark & \checkmark  & \checkmark\\
6.~Finite data              & \ding{55} & \checkmark & \checkmark & \checkmark & \checkmark  & \checkmark\\
7.~Efficient evaluation    & \ding{55} &\ding{55}&  \checkmark & \checkmark & \checkmark &  \checkmark 
    \end{tabular}    
    
    \caption{\textbf{Overview of established capacity measures} and desirable properties. 
    The first property is whether the measure can be mathematically related to the generalization error via an upper bound. The second states whether this bound is good in practice, i.e., that the measure correlates with the generalization error in various experimental setups, such as~\cite{zhang2017understanding}. Scale invariance corresponds to the measure being insensitive to inconsequential transformations of the model, such as multiplying a neural network's weights by a constant. Data and training dependence refers to a measure accounting for data drawn from a distribution and the learning algorithm employed. Finite data merely implies that the measure can handle finite data. Lastly, efficient evaluation refers to the possibility of estimating the capacity measure in polynomial time (in the number of data).}
    \label{table_overview}
\end{table}

\section{Preliminaries}
In this section, we provide an overview of relevant literature and a concise introduction to generalization error bounds and the Fisher information.

\subsection{Related work}


We briefly discuss relevant capacity measures proposed in literature, but defer to~\cite{jiang2019fantastic} for a more comprehensive overview.
Given a model class, Vapnik \emph{et al.}~showed that the VC dimension can provide an upper bound on generalization error~\cite{vapnik1994measuring}. While this was a crucial first step in using capacity to understand generalization, the VC dimension rests on unrealistic assumptions, such as access to infinite data, and ignores things like training dependence and the fact that data, more reasonably, comes from a distribution~\cite{holden1995practical}. The closely-related Rademacher complexity relaxes some of the assumptions made on the model class, but still suffers similar issues to the VC dimension~\cite{yin2019rademacher, wang2018identifying}. Since then, a myriad of capacity measures aiming to circumvent these problems and provide tighter generalization error bounds, have been proposed.

Margin-based capacity measures stemmed from the work of Vapnik and Chervonenkis in 1974 who pointed out that generalization error bounds based on the VC dimension may be significantly enhanced in the case of linear classifiers that produce large margins. In~\cite{bartlett1998boosting}, it was shown that the phenomenon where boosting models (no matter how large you make them) do not overfit data, could also be explained by the large margins these boosting models achieved. Since the grand mystery in modern deep learning can be characterized by the same phenomenon -- extremely large overparameterized neural networks that seemingly do not overfit data -- it seems natural to try extend the idea of margin bounds to these model families. Moreover, margin-based approaches allow us to leverage the fact that learning algorithms, like gradient decent, produce classifiers with large margins on training data.\footnote{A beautiful review of margin bounds is encapsulated in~\cite{anthony2009neural} where lower bounds are proved for certain function classes.} Unfortunately, looking at margins in isolation does not say much about the performance of deep neural networks on unseen data. There have been recent investigations on how to add a normalization such that margin-based measures become informative. Most of these proposals involve the incorporation of the Lipschitz constant of a network, which is simply the product of the spectral norms of the weight matrices~\cite{bartlett2017spectrally}. These normalized margin-based techniques gave rise to norm-based capacity measures which appear promising, however, it is still unclear how to perform this normalization and often, the normalization depends on some factor that scales with the size of the model, which is undesirable in the case of deep neural networks~\cite{pmlr-v40-Neyshabur15}.

Another interesting proposal for measuring capacity came about by trying to characterize the local minima achieved by deep networks after training~\cite{keskar2016large, hochreiter1997flat}. These so-called sharpness-based measures often depend on the Hessian, which incorporates a notion of curvature at a particular point in the loss landscape. It was believed that sharper minima led to better generalization properties, although this was later shown to be incorrect as sharpness measures were usually not scale invariant and thus, did not correlate well with generalization error in various scenarios~\cite{dinh2017sharp}.   

This leads us to the purpose of this study where we motivate the local effective dimension as a capacity measure. Originally outlined in~\cite{Figalli20}, the effective dimension arises from the principle of minimum description length and thus, tries to capture existing redundancy in a statistical model~\cite{cover,rissanen1996fisher}. Redundancy has been widely studied in deep learning through techniques like pruning and model compression~\cite{YEOM2021107899, molchanov2019importance, wiedemann2019compact, 8573867, 8253600, cheng2017survey,tishby2015deep}. Interestingly, attempts to connect redundancy/minimum description to generalization performance have also been studied in~\cite{10.1145/168304.168306, achille2018emergence}, and the effective dimension itself was used to compare the capacity of quantum and classical machine learning models in~\cite{abbas21}. We refine the existing definitions of the effective dimension, which in turn leads us to the creation of a local version that conveniently meets the criteria presented in Table~\ref{table_overview}.

\subsection{Generalization error}\label{sec:gen_error}
A typical first step in motivating use for a capacity measure is to prove that it bounds the generalization error. Informally, all generalization error bounds have the same structure 
\begin{align*}
\textnormal{generalization error} \leq \textnormal{estimate of error}\ +\ \textnormal{complexity penalty} \, ,
\end{align*}
which attempts to relate generalization error to an empirical estimate of the error, plus a complexity penalty captured by the proposed capacity measure. Since empirical estimates correspond to the training error on available data, which can mostly be trained to zero with very deep networks in practice, the capacity term is usually of most relevance.

More formally, suppose we are given a hypothesis class, $\cH$, of functions mapping from $\cX$ to $\cY$, a training set $\cS_n = \{ (x_1, y_1), \dots, (x_n, y_n) \} \in (\cX \times \cY)^n$ where the data pairs $(x_i,y_i)$ are drawn i.i.d.~from some unknown joint distribution $p$, and let $\ell:\cY \times \cY \to \R$ be a loss function. The machine learning task is to find a particular hypothesis $h \in \cH$ that minimizes the \emph{expected risk}, defined as
$R(h) := \E_{(x,y) \sim p}[\ell(h(x),y)]$. Since we only have access to a training set $\cS_n$, a good strategy to find the best hypothesis $h \in \cH$ is to minimize the so called \emph{empirical risk}, defined as $R_n(h) := \frac1n \sum_{i=1}^n \ell(h(x_i),y_i)$. The difference between the expected and the empirical risk is known as the \emph{generalization error gap}. This gap gives us an indication as to whether a hypothesis $h \in \cH$ will perform well on unseen data, drawn from the unknown joint distribution $p$~\cite{neyshabur2017exploring}. Therefore, an upper bound on the quantity
\begin{align} \label{eq_toBound}
\sup_{h \in \cH} | R(h) - R_n(h) | \, ,
\end{align}
which vanishes as $n$ grows large, is of considerable interest.\footnote{We assume that the loss function is Lipschitz continuous which implies that~\eqref{eq_toBound} vanishes as $n \to \infty$.} 
Capacity measures help quantify the expressiveness and power of $\cH$. Thus, the quantity in~\eqref{eq_toBound} is typically bounded by an expression that depends on some notion of capacity\cite{vapnik1994measuring}.


\subsection{Fisher information}
The Fisher information has many interdisciplinary interpretations~\cite{frieden_2004,cover}. In machine learning, several capacity measures incorporate the Fisher information in different ways~\cite{fisherraonorm, tsuda2004asymptotic}. It is also a crucial quantity in the effective dimension and is thus, briefly introduced here. 

Consider a parameterized statistical model $p(x,y; \theta) = p(y|x;\theta)p(x)$ which describes the joint relationship between data pairs $(x,y)$ for all $x\in\cX$, $y\in \cY$ and $\theta \in \Theta \subseteq \R^d$. The input distribution, $p(x)$, is a prior distribution over the data and the conditional distribution, $p(y|x;\theta)$ describes the input-output relation generated by the model for a fixed $\theta \in \Theta$. The full parameter space $\Theta$ forms a Riemannian space which gives rise to a Riemannian metric, namely, the Fisher information which we can represent in matrix form
\begin{align*} 
    F(\theta)
    = \E_{(x,y) \sim p} \Big[\frac{\partial}{\partial \theta} \log p(x,y; \theta) \frac{\partial}{\partial \theta}  \log p(x,y; \theta)^{\trans} \Big] \in \R^{d\times d} \, ,
\end{align*}
where $(x_j, y_j)_{j=1}^k$ are i.i.d.~drawn from the distribution $p(x,y; \theta)$ \cite{kunstner2019limitations}. By definition, the Fisher information matrix is positive semidefinite and hence, its eigenvalues are non-negative.

In practical applications where $d$ is typically large, there exists sophisticated techniques to efficiently approximate
the Fisher information matrix. This is discussed in Appendix~\ref{app_eff_FIM_calculation}.

\section{Effective dimension}\label{sec:ed}
The origin of the effective dimension arose from a simple operational question: Is it possible to quantify the number of parameters that are truly active in a statistical model?\footnote{A parameter is considered active if it has a sufficiently large influence on the outcome of its statistical model, i.e.~varying the parameter changes the model.} In the case of deep neural networks, it has already been shown that many parameters are inactive, inspiring better design techniques~\cite{han2015deep}. Measuring parameter activeness can be made mathematically precise with tools from statistics and information theory. In particular, the effective dimension unites the principle of minimum description length with the Kolmogorov complexity of a model~\cite{rissanen1996fisher,cover}. We briefly introduce the global effective dimension here and refer the interested reader to~\cite{Figalli20,abbas21} for a more detailed explanation.

\subsection{Global effective dimension}
To shorten notation we write
\begin{align} \label{eq_kappa}
    \kappa_{n,\gamma} = \frac{\gamma n}{2\pi \log n} \, ,
\end{align}
for $n \in \N$, which represents the number of data samples available, and a constant $\gamma \in (\frac{2\pi \log n}{n},1]$.

\begin{definition}\label{def_ED_global}
The \emph{global effective dimension} of a statistical model $\cM_\Theta := \{p(\cdot, \cdot; \theta) : \theta \in \Theta \subset \mathbb{R}^d \}$ with respect to $n\in \N$ and $\gamma \in (\frac{2\pi \log n}{n},1]$, is defined as
\begin{align}\label{eq_dim}
d_{n,\gamma}(\cM_{\Theta}):= \frac{2 \log\left( \frac{1}{V_{\Theta}} \int_{\Theta} \sqrt{\det\big(\id_d +  \kappa_{n,\gamma} \bar F(\theta) \big) } \, \di \theta  \right)}{  \log  \kappa_{n,\gamma}} \, ,
\end{align}
where $V_\Theta:=\int_{\Theta}\di \theta \in \R_+$ is the volume of the parameter space and $\kappa_{n,\gamma}$ is defined in~\eqref{eq_kappa}. The matrix $\bar F(\theta) \in \R^{d\times d}$ is the normalized Fisher information matrix defined as
\begin{align*}
    \bar F_{ij}(\theta):= d \frac{V_{\Theta}}{\int_{\Theta} \tr( F(\theta)) \di \theta} F_{ij}(\theta)\, ,
\end{align*}
where $F(\theta) \in \R^{d \times d}$ denotes the Fisher information matrix of $p(\cdot, \cdot; \theta)$.  
\end{definition}
For conciseness, we simply denote the global effective dimension as $d_{n,\gamma}$.
The parameter $\gamma$ has been introduced to ensure that the generalization bound for the global effective dimension is meaningful in the limit where $n \to \infty$~\cite[Theorem~1]{abbas21}.\footnote{This is also visible from Theorem~\ref{thm_generalization_bound} where we want to ensure that the right-hand side of~\eqref{eq_generalization_bound} vanishes as $n \to \infty$.}   
Additionally, the geometric interpretation of the global effective dimension is only meaningful for sufficiently large values of $n \in \N$.\footnote{Technically we require $n\geq 19$ to ensure that $2\pi \log n < n$ and hence the existence of a feasible constant $\gamma \in (\frac{2\pi \log n}{n},1]$.}

As shown in~\cite[Remark~2]{abbas21} the global effective dimension converges to the maximal rank of the Fisher information matrix $\bar r := \max_{\theta \in \Theta} r_{\theta} \in \{1,2,\ldots,d\}$ in the limit of $n \to \infty$, where $r_{\theta}$ denotes the rank of $F(\theta)$. Thus, it often makes sense to standardize the measure by looking at the \emph{normalized effective dimension}, denoted by $\bar d_{n,\gamma}=d_{n,\gamma}/d$, which gives us a proportion of active parameters relative to the total number of parameters in the model.

Next, we prove that the global effective dimension is continuous as a function of the Fisher information matrix. Since the Fisher information is typically approximated in practice, such a statement is relevant to ensure small deviations in the Fisher information do not exacerbate possible deviations in the global effective dimension (see Section~\ref{sec_numerics} for more details).\footnote{This proposition also holds for the local effective dimension introduced in~\eqref{def_local_ED}.}

\begin{proposition}[Continuity of the effective dimension] \label{prop_continuity_ED}
Let $n\in \N$, $\gamma \in (\frac{2\pi \log n}{n},1]$, and consider two statistical models $\cM_{\Theta}$ and $\cM'_{\Theta}$ with $\Theta \subset \mathbb{R}^d$ and corresponding Fisher information matrices $F$ and $F'$, respectively. Then,
\begin{align*}
|d_{n,\gamma}(\cM_{\Theta}) - d_{n,\gamma}(\cM'_{\Theta})| 
\leq C_d  \left( \frac{1}{\phi(F)} + \frac{1}{\phi(F')}  \right) \max_{\theta \in \Theta}  \norm{\sqrt{\bar F(\theta)} - \sqrt{\bar F'(\theta)}} +\frac{2\psi(F)+2\psi(F')}{\log \kappa_{n,\gamma}} \, , 
\end{align*}
where $C_d$ is a dimensional constant, $\kappa_{n,\gamma}$ is defined in~\eqref{eq_kappa}, $\phi(F):=\frac{1}{V_{\Theta}} \int_{\Theta} \sqrt{\det(\bar F(\theta))} \di \theta$, and 
\begin{align*}
\psi(F) = \max \Big \{ \log \left( \frac{1}{V_{\Theta}} \int_{\Theta} \sqrt{\det(\id_d + \bar F(\theta))} \di \theta \right), - \log \left( \frac{1}{V_{\Theta}} \int_{\Theta} \sqrt{\det(\bar F(\theta))} \di \theta \right) \Big \} \, .
\end{align*}
\end{proposition}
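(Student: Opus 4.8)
The plan is to reduce the whole statement to a single logarithmic ratio and then split that ratio into a \emph{Lipschitz} piece (which will become the term with the $1/\phi$ weights) and a \emph{leading-order offset} piece (which will become the $\psi$-term). Writing
\[
A(F) := \frac{1}{V_{\Theta}}\int_{\Theta}\sqrt{\det\big(\id_d + \kappa_{n,\gamma}\bar F(\theta)\big)}\,\di\theta ,
\]
so that $d_{n,\gamma}(\cM_{\Theta}) = \frac{2}{\log\kappa_{n,\gamma}}\log A(F)$, the quantity to control is $\frac{2}{\log\kappa_{n,\gamma}}\,|\log A(F)-\log A(F')|$. First I would record the two elementary eigenvalue inequalities, valid for $\kappa_{n,\gamma}\ge 1$ (which holds since $\gamma>\tfrac{2\pi\log n}{n}$) and every $\lambda\ge 0$: namely $\kappa_{n,\gamma}\lambda\le 1+\kappa_{n,\gamma}\lambda\le\kappa_{n,\gamma}(1+\lambda)$. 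Taking products over the eigenvalues of $\bar F(\theta)$ yields the pointwise sandwich $\kappa_{n,\gamma}^{d/2}\sqrt{\det\bar F(\theta)}\le\sqrt{\det(\id_d+\kappa_{n,\gamma}\bar F(\theta))}\le\kappa_{n,\gamma}^{d/2}\sqrt{\det(\id_d+\bar F(\theta))}$, and after integrating, $\kappa_{n,\gamma}^{d/2}\phi(F)\le A(F)\le\kappa_{n,\gamma}^{d/2}M(F)$ with $M(F):=\frac1{V_{\Theta}}\int_{\Theta}\sqrt{\det(\id_d+\bar F(\theta))}\,\di\theta$.

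The offset piece is then immediate: the sandwich gives $|\log A(F)-\tfrac d2\log\kappa_{n,\gamma}|\le\psi(F)$, since $\log A(F)-\tfrac d2\log\kappa_{n,\gamma}$ lies in $[\log\phi(F),\log M(F)]$ and $\psi(F)=\max\{\log M(F),-\log\phi(F)\}$. As the $\tfrac d2\log\kappa_{n,\gamma}$ terms cancel in the difference for the two models, this already contributes the summand $\frac{2\psi(F)+2\psi(F')}{\log\kappa_{n,\gamma}}$. (In fact this offset bound alone controls the left-hand side, but it does not vanish as $F'\to F$; the point of the first summand is to exhibit the genuine \emph{continuity}, i.e.\ a contribution proportional to the size of the perturbation $\max_\theta\norm{\sqrt{\bar F(\theta)}-\sqrt{\bar F'(\theta)}}$.)

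For the Lipschitz piece I would use $|\log A(F)-\log A(F')|\le\frac{|A(F)-A(F')|}{\min(A(F),A(F'))}$. The denominator is bounded below by $\kappa_{n,\gamma}^{d/2}\min(\phi(F),\phi(F'))$ from the sandwich, and $\frac{1}{\min(\phi(F),\phi(F'))}\le\frac{1}{\phi(F)}+\frac{1}{\phi(F')}$, which is the origin of the $1/\phi$ weights. For the numerator I would compare the integrands as functions of the matrix square root along the segment $S_t:=(1-t)\sqrt{\bar F(\theta)}+t\sqrt{\bar F'(\theta)}$, using
\[
\frac{\di}{\di t}\sqrt{\det(\id_d+\kappa_{n,\gamma}S_t^2)} = \sqrt{\det(\id_d+\kappa_{n,\gamma}S_t^2)}\;\kappa_{n,\gamma}\,\tr\!\Big((\id_d+\kappa_{n,\gamma}S_t^2)^{-1}S_t\big(\sqrt{\bar F'}-\sqrt{\bar F}\,\big)\Big),
\]
and bounding the trace by $\norm{\kappa_{n,\gamma}(\id_d+\kappa_{n,\gamma}S_t^2)^{-1}S_t}$ times the trace norm of $\sqrt{\bar F'(\theta)}-\sqrt{\bar F(\theta)}$ via a Hölder inequality. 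The operator factor has eigenvalues $\tfrac{\kappa_{n,\gamma}\sigma}{1+\kappa_{n,\gamma}\sigma^2}$, and it is the sharp control of this scalar (using $\tfrac{\kappa_{n,\gamma}\sigma}{1+\kappa_{n,\gamma}\sigma^2}\le\tfrac1\sigma$ rather than the crude $\tfrac12\kappa_{n,\gamma}^{1/2}$) that one needs. Integrating in $t$ and $\theta$ and dividing by the denominator should collapse the $\kappa_{n,\gamma}^{d/2}$ factors and leave $C_d\big(\tfrac1{\phi(F)}+\tfrac1{\phi(F')}\big)\max_\theta\norm{\sqrt{\bar F(\theta)}-\sqrt{\bar F'(\theta)}}$.

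The hard part is exactly this last cancellation. Taken in isolation, the pointwise determinant perturbation carries both an unfavourable factor (from the worst eigenvalue $\sigma\approx\kappa_{n,\gamma}^{-1/2}$) and a factor growing like the product of the eigenvalues of $S_t$, and only \emph{after} dividing by $\min(A,A')\ge\kappa_{n,\gamma}^{d/2}\min(\phi,\phi')$ do these large, eigenvalue-dependent contributions cancel against the $1/\phi$ weights, leaving an essentially $\kappa_{n,\gamma}$-free Lipschitz constant. Making this cancellation rigorous — while handling the non-commutativity of $\sqrt{\bar F}$ and $\sqrt{\bar F'}$ along the segment $S_t$, choosing the operator/trace-norm pairings so that the surviving factors are absorbed into the dimensional constant $C_d$, and keeping everything uniform over the admissible range of $\kappa_{n,\gamma}$ (down to $\kappa_{n,\gamma}$ near $1$, where the $\psi$-term is what absorbs the remainder) — is the delicate step; the value of $C_d$ is then whatever these norm inequalities produce.
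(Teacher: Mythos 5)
Your ``offset'' step is correct and, read literally, already proves the proposition: since $\kappa_{n,\gamma}>1$, the eigenvalue sandwich $\kappa_{n,\gamma}\lambda\le 1+\kappa_{n,\gamma}\lambda\le \kappa_{n,\gamma}(1+\lambda)$ gives $\log\phi(F)\le \log A(F)-\tfrac d2\log\kappa_{n,\gamma}\le \log M(F)$, hence $|d_{n,\gamma}(\cM_{\Theta})-d_{n,\gamma}(\cM'_{\Theta})|\le \frac{2\psi(F)+2\psi(F')}{\log\kappa_{n,\gamma}}$, and both summands on the right-hand side of the proposition are nonnegative. This is in fact a more direct route to the $\psi$-terms than the paper's, which obtains them by comparing $d_{n,\gamma}$ via the triangle inequality with a modified quantity $\tilde d_{n,\gamma}=\bar r+\frac{2}{\log\kappa_{n,\gamma}}\log f(\cdot)$, where $f(t)=\frac{1}{V_{\Theta}}\int_{\Theta}\det\bigl(\id_d/\sqrt{\kappa_{n,\gamma}}+G_t\bigr)\,\di\theta$ and $G_t=t\sqrt{\bar F(\theta)}+(1-t)\sqrt{\bar F'(\theta)}$.

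The part you flag as ``the delicate step'' is, however, exactly where the paper does its real work, and your route to it has a genuine gap. The paper avoids the cancellation you are worried about by two choices: (i) the interpolated matrix $G_t$ enters the determinant \emph{linearly} in $t$, and the identity is scaled as $\id_d/\sqrt{\kappa_{n,\gamma}}$, so $|\frac{\di}{\di t}\det(\id_d/\sqrt{\kappa_{n,\gamma}}+G_t)|\le C_{\theta,d}\norm{\sqrt{\bar F(\theta)}-\sqrt{\bar F'(\theta)}}$ with a constant depending only on $d$ and $\|\sqrt{\bar F}\|^{d-1},\|\sqrt{\bar F'}\|^{d-1}$ --- no factor $\kappa_{n,\gamma}^{d/2}$ ever appears in the numerator; and (ii) the denominator at \emph{intermediate} $t$ is controlled by concavity of $A\mapsto(\det A)^{1/d}$ on positive matrices, yielding $f(t)\ge t^df(1)+(1-t)^df(0)$ and then $\int_0^1|f'|/f\le C(1/f(0)+1/f(1))\le C(1/\phi(F)+1/\phi(F'))$. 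The price of replacing $\sqrt{\det(\id_d+\kappa_{n,\gamma}\bar F)}$ by $\kappa_{n,\gamma}^{d/2}\det(\id_d/\sqrt{\kappa_{n,\gamma}}+\sqrt{\bar F})$ (these are \emph{not} equal) is precisely what the $\psi$-terms absorb. In your version, which keeps the exact integrand $\sqrt{\det(\id_d+\kappa_{n,\gamma}S_t^2)}$, the H\"older split you propose does not close: bounding the operator factor by $\max_j \kappa_{n,\gamma}\sigma_j/(1+\kappa_{n,\gamma}\sigma_j^2)\le 1/\sigma_{\min}(S_t)$ leaves an uncontrolled blow-up when $S_t$ degenerates, multiplied by a prefactor $\sqrt{\det(\id_d+\kappa_{n,\gamma}S_t^2)}$ that still carries $\kappa_{n,\gamma}^{d/2}$ times a full eigenvalue product. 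The estimate can be rescued, but only by pairing the scalar $\kappa_{n,\gamma}\sigma_j/(1+\kappa_{n,\gamma}\sigma_j^2)$ with the $j$-th determinant factor $\sqrt{1+\kappa_{n,\gamma}\sigma_j^2}$ (so that $\kappa_{n,\gamma}\sigma_j/\sqrt{1+\kappa_{n,\gamma}\sigma_j^2}\le\sqrt{\kappa_{n,\gamma}}$ and $\sqrt{1+\kappa_{n,\gamma}\sigma_i^2}\le\sqrt{\kappa_{n,\gamma}}(1+\sigma_i)$ for $i\ne j$), not by the global bound $\le 1/\sigma$ you single out. Either supply that factor-by-factor estimate, or switch to the paper's linear-in-$t$ determinant together with the concavity lemma.
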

The proof is given in Appendix~\ref{app_proof_continuity}.
Proposition~\ref{prop_continuity_ED} is informative for statistical models $\cM_{\Theta}$ and $\cM'_{\Theta}$ with corresponding Fisher information matrices $F$ and $F'$ such that $\phi(F)>0$ and $\phi(F')>0$, respectively. This unavoidable consequence is due to $\lim_{n \to \infty} d_{n,\gamma}(\cM_{\Theta}) = \bar r = \max_{\theta \in \Theta} \rank (F(\theta))$ and $\lim_{n \to \infty} d_{n,\gamma}(\cM_{\Theta}) = \bar r' = \max_{\theta \in \Theta} \rank(F'(\theta))$. Hence, we see that when $\bar r \ne \bar r'$, the effective dimension is not continuous as $n \to \infty$. This is consistent with Proposition~\ref{prop_continuity_ED} where in the case of $\bar r<d$ or $\bar r'<d$, we have $\phi(F)=0$ or $\phi(F')=0$, respectively.

\begin{remark}[Stabilized computation of the effective dimension]\label{rmk_comp}
For large $d$, sufficiently large $n$ and models with a full rank Fisher information matrix, the effective dimension is of order $d$~\cite[Remark~2]{abbas21}. This implies that $\det(\id_d + \kappa_{n,\gamma} \bar F(\theta))$ is exponentially large in $d$, which makes direct calculation of the effective dimension via~\eqref{eq_dim} numerically challenging when large models are considered. This can be circumvented by rewriting the effective dimension as
\begin{align*}
d_{n,\gamma}(\cM_{\Theta})=\frac{2}{\log\kappa_{n,\gamma}} \log\left( \frac{1}{V_{\Theta}} \int_{\Theta} \exp\left( \frac{1}{2} \log \det\big(\id_d + \kappa_{n,\gamma} \bar F(\theta) \big)  \right)   \, \di \theta  \right) \, .
\end{align*}
Noting that 
\begin{align}
   \frac{1}{2}\log \det\Big(\id_d + \kappa_{n,\gamma} \bar F(\theta) \Big)
   &=\frac{1}{2} \tr \log\Big(\id_d + \kappa_{n,\gamma} \bar F(\theta) \Big) \nonumber \\
   &=\frac{1}{2} \sum_{i=1}^d \log \Big(1 + \kappa_{n,\gamma} \lambda_i\big(\bar F(\theta)\big) \Big)
   =:z(\theta) \, ,\label{eq_calc_ed}
\end{align}
where $\lambda_i(\bar F(\theta))$ denotes the $i$-th eigenvalue of $\bar F(\theta)$. The quantity $z(\theta)$ can be computed without any under- or overflow problems for large $n$ and $d$. Choosing $\zeta=\max_{\theta \in \Theta} z(\theta)$ then gives 
\begin{align*}
 d_{n,\gamma}(\cM_{\Theta})  = \frac{2 \zeta }{\log\kappa_{n,\gamma}}  + \frac{2}{\log\kappa_{n,\gamma}} \log\left( \frac{1}{V_{\Theta}} \int_{\Theta} \exp \big(  z(\theta)-\zeta  \big)   \, \di \theta  \right)   \, ,
\end{align*}
which is a numerically stable expression for the effective dimension.
\end{remark}

\subsection{Local effective dimension}
While the global effective dimension has nice properties, an important aspect to note is that it incorporates the full parameter space $\Theta$. In practice, however, training models with a learning algorithm inherently restricts the space of parameters that a model truly has access to. Once a model is trained, only a fixed parameter set $\theta^\star \in \Theta$ is considered, which is chosen to minimize a certain loss function. 

This leads us to the introduction of the local effective dimension which accounts for dependence on the training algorithm. To achieve this, we define an $\eps$-ball around a fixed parameter set $\theta^\star \in \Theta \subset \mathbb{R}^d$ for $\eps>0$ as
\begin{align*}
    \mathcal{B}_{\eps}(\theta^\star):=\{ \theta \in \Theta : \norm{\theta-\theta^\star} \leq \eps \}\, ,
\end{align*}
with a volume $V_\eps:=\int_{\mathcal{B}_{\eps}(\theta^\star)} \di \theta = \frac{\pi^{d/2} \eps^d}{\Gamma(d/2+1)} \in \R_+$, where $\Gamma$ denotes Euler's gamma function.

\begin{definition} \label{def_local_ED}
The \emph{local effective dimension} of a statistical model $\cM_\Theta := \{p(\cdot, \cdot; \theta) : \theta \in \Theta \}$ around $\theta^\star \in \Theta$ with respect to $n\in \N$, $\gamma \in (\frac{2\pi \log n}{n},1]$, and $\eps > 1/\sqrt{n}$ is defined as
\begin{align*}
d_{n,\gamma}(\cM_{\mathcal{B}_{\eps}(\theta^\star)})= \frac{2 \log\left( \frac{1}{V_\eps} \int_{\mathcal{B}_{\eps}(\theta^\star)} \sqrt{\det\big(\id_d +  \kappa_{n,\gamma} \bar F(\theta) \big) } \, \di \theta  \right)}{  \log  \kappa_{n,\gamma}} \, ,
\end{align*}
for $\kappa_{n,\gamma}$ given by~\eqref{eq_kappa}.
The matrix $\bar F(\theta) \in \R^{d\times d}$ is the normalized Fisher information matrix defined as
\begin{align*}
    \bar F_{ij}(\theta):= d \frac{V_{\eps}}{\int_{\mathcal{B}_{\eps}(\theta^\star)} \tr( F(\theta)) \di \theta} F_{ij}(\theta)\, ,
\end{align*}
where $F(\theta) \in \R^{d \times d}$ denotes the Fisher information matrix of $p(\cdot, \cdot; \theta)$.
\end{definition}

For ease of notation, we denote the local effective dimension as $d_{n,\gamma,\eps}$.
From Definition~\ref{def_local_ED}, we immediately see that the local effective dimension is scale invariant as it depends on the normalized Fisher information matrix, as well as training dependent, since the training determines $\theta^\star$. Via its dependence on the Fisher information, the local effective dimension also incorporates an assumed distribution for the data and is built for finite data, as summarized in Table~\ref{table_overview}. Proposition~\ref{prop_continuity_ED} further proves that the local effective dimension is continuous in the Fisher information matrix. 

The computationally dominant part in evaluating the local effective dimension is the calculation of the Fisher information matrix. Luckily, this is a well-studied problem with existing proposals for efficient evaluation~\cite{kunstner2019limitations,martens2015optimizing}. Since we only require the eigenvalues of the Fisher matrix for the local effective dimension, we can further exploit these Fisher approximations and do not need to store a $d \times d$ matrix (see Section~\ref{app_eff_FIM_calculation} for more details). Additionally, the integral over the $\eps$-ball can be evaluated efficiently with Monte-Carlo type methods.

To complete the criteria from Table~\ref{table_overview}, it remains to show that the local effective dimension bounds and correlates with the generalization error, which is illustrated next.


\section{Generalization and the local effective dimension}
Understanding the role of the local effective dimension in the context of generalization requires a rigorous relationship to be defined. We demonstrate this relationship through the generalization error, bounded by the local effective dimension. 

\subsection{Generalization error bound}
Consider machine learning models described by stochastic maps, parameterized by some $\theta \in \Theta$ and a loss function as a mapping $\ell:\mathrm{P}(\cY) \times \mathrm{P}(\cY) \to \R$ where $\mathrm{P}(\cY)$ denotes the set of distributions on $\cY$. The following regularity assumption on the model $\cM_\Theta := \{p(\cdot,\cdot; \theta) : \theta \in \Theta \}$ is assumed:
\begin{align}\label{it_i}
     \Theta \ni \theta \mapsto p(\cdot,\cdot;\theta) \quad \textnormal{is } M_1 \textnormal{-Lipschitz continuous w.r.t.~the supremum norm} \, .
\end{align}

\begin{theorem} \label{thm_generalization_bound}
Let $\Theta=[-1,1]^d$ and consider a statistical model $\cM_\Theta := \{p(\cdot,\cdot; \theta) : \theta \in \Theta \}$ satisfying~\eqref{it_i} such that $\bar F(\theta)$ has full rank for all $\theta \in \Theta$, and $\|\nabla_{\theta} \log \bar F(\theta)\|\leq \Lambda$ for some $\Lambda \geq 0$ and all $\theta \in \Theta$.
Furthermore, let $\ell:\mathrm{P}(\cY) \times \mathrm{P}(\cY) \to [-B/2,B/2]$ for $B > 0$ be a loss function that is Lipschitz continuous with constant $M_2$ in the first argument with respect to the total variation distance. Then, there exists a dimensional constant $c_{d}$ such that for $\theta^\star \in \Theta$, $n\in \N$, $\gamma \in (\frac{2\pi \log n}{n},1]$, and $\eps>1/\sqrt{n}$ we have
\begin{align} 
&\mathbb{P}\left(  \sup_{\theta \in \cB_{\eps}(\theta^\star)} | R(\theta) -  R_n(\theta) | 
 \geq \frac{4M \eps}{\sqrt{ \kappa_{n,\gamma} }} \right) 
 \leq c_d (1+\eps \Lambda)^d  \cdot  \kappa_{n,\gamma}^{\frac{d_{n,\gamma,\eps}}{2}} \exp\left(-\frac{16\pi M^2  \eps^{2} \log n}{B^2\gamma} \right)\, , \label{eq_generalization_bound}
\end{align}
where $M=M_1 M_2$, $\kappa_{n,\gamma}$ is defined in~\eqref{eq_kappa}, and $d_{n,\gamma,\eps}$ is the local effective dimension $d_{n,\gamma}(\cM_{\mathcal{B}_{\eps}(\theta^\star)})$.
\end{theorem}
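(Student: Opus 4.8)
My plan is to prove \eqref{eq_generalization_bound} by a single-scale covering argument combined with a pointwise concentration inequality, arranged so that the effective dimension enters precisely as the logarithm of the covering number of $\cB_{\eps}(\theta^\star)$ measured in the regularized Fisher metric $\id_d + \kappa_{n,\gamma}\bar F(\theta)$. Concretely, I would first fix a finite net $\{\theta_1,\dots,\theta_N\}\subset\cB_{\eps}(\theta^\star)$ for this metric, reduce the supremum over the continuous ball to the maximum over the net using the regularity hypotheses to bound the discretization error, then control each $|R(\theta_j)-R_n(\theta_j)|$ by Hoeffding's inequality, and finally take a union bound over the $N$ net points. The threshold $\tfrac{4M\eps}{\sqrt{\kappa_{n,\gamma}}}$ is split into a per-point deviation budget and a discretization budget of equal size $\tfrac{2M\eps}{\sqrt{\kappa_{n,\gamma}}}$ each.

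The probabilistic ingredient is clean. Since $\ell$ takes values in $[-B/2,B/2]$, for fixed $\theta$ the summands of $R_n(\theta)=\tfrac1n\sum_i\ell(\cdot)$ have range $B$, so Hoeffding gives $\mathbb{P}(|R(\theta)-R_n(\theta)|\geq s)\leq 2\exp(-2ns^2/B^2)$. Choosing the per-point threshold $s=2M\eps/\sqrt{\kappa_{n,\gamma}}$ and substituting $\kappa_{n,\gamma}=\gamma n/(2\pi\log n)$ from \eqref{eq_kappa} reproduces exactly the exponential factor $\exp(-16\pi M^2\eps^2\log n/(B^2\gamma))$ in \eqref{eq_generalization_bound}. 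The geometric ingredient is that the net size obeys $N\lesssim c_d(1+\eps\Lambda)^d\,\kappa_{n,\gamma}^{d_{n,\gamma,\eps}/2}$: a metric ball of radius $\eps$ for $\id_d+\kappa_{n,\gamma}\bar F(\theta)$ is a Euclidean ellipsoid of volume $\eps^d V_1/\sqrt{\det(\id_d+\kappa_{n,\gamma}\bar F(\theta))}$, where $V_1:=\pi^{d/2}/\Gamma(d/2+1)$ so that $V_\eps=\eps^d V_1$; hence covering $\cB_{\eps}(\theta^\star)$ requires on the order of $\tfrac{1}{V_\eps}\int_{\cB_{\eps}(\theta^\star)}\sqrt{\det(\id_d+\kappa_{n,\gamma}\bar F(\theta))}\,\di\theta$ of them, and by Definition~\ref{def_local_ED} this normalized integral is exactly $\kappa_{n,\gamma}^{d_{n,\gamma,\eps}/2}$.

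The subtle point, and what I expect to be the main obstacle, is the interface between the metric covering and the discretization error. A crude Euclidean $M$-Lipschitz bound applied to metric balls would force a net that is fine in every direction and would produce the ambient count $\kappa_{n,\gamma}^{d/2}$ rather than $\kappa_{n,\gamma}^{d_{n,\gamma,\eps}/2}$; to recover the effective dimension one must exploit that along the low-curvature directions of $\bar F$ the model, and hence the risk, genuinely varies slowly. This is precisely the standard second-order relation between total-variation distance and the Fisher metric, which together with the $M_2$-Lipschitz continuity of $\ell$ in total variation and the $M_1$-Lipschitz continuity \eqref{it_i} lets one bound $|R(\theta)-R(\theta')|$ and $|R_n(\theta)-R_n(\theta')|$ inside each ellipsoid by a constant multiple of $\eps/\sqrt{\kappa_{n,\gamma}}$, matching the discretization budget. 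Making this uniform over the ball requires freezing $\bar F(\theta)$ at $\bar F(\theta^\star)$ up to bounded distortion: the hypothesis $\|\nabla_{\theta}\log\bar F(\theta)\|\leq\Lambda$ controls the multiplicative variation of the eigenvalues of $\bar F$ across $\cB_{\eps}(\theta^\star)$ by a factor $(1+\eps\Lambda)$ per direction, which is the origin of the $(1+\eps\Lambda)^d$ prefactor and also guarantees that the determinant integral is comparable to its value at $\theta^\star$. Assembling the Hoeffding bound, the union bound over $N$ points, and the discretization estimate, and absorbing the volume ratio $V_\eps/V_1=\eps^d$ together with the gamma-function constants into the dimensional constant $c_d$, yields \eqref{eq_generalization_bound}; the full-rank assumption on $\bar F$ is what makes the regularized metric genuinely Riemannian and the covering argument well defined.
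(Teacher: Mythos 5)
Your proposal follows essentially the same route as the paper: its proof is exactly your decomposition into a covering-plus-Hoeffding union bound (Lemma~\ref{lem_Hoeffding}, with the same threshold split at $\xi=4M\eps/\sqrt{\kappa_{n,\gamma}}$ and the same exponential factor) together with a bound on the covering number of $\cB_{\eps}(\theta^\star)$ by $c_d(1+\eps\Lambda)^d\,\kappa_{n,\gamma}^{d_{n,\gamma,\eps}/2}$ (Lemma~\ref{lem_coveringED}), the latter being precisely your identification of the normalized determinant integral with $\kappa_{n,\gamma}^{d_{n,\gamma,\eps}/2}$ via Definition~\ref{def_local_ED}. The only difference is one of packaging: the paper obtains both ingredients by rescaling $\cB_{\eps}(\theta^\star)\to\cB_{1}(\theta^\star)$ and invoking Lemmas~1 and~2 of~\cite{abbas21}, whereas you sketch the Fisher-metric covering and the non-Euclidean discretization control directly --- the ``subtle point'' you flag is exactly where the work hidden in that citation lives.
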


Theorem~\ref{thm_generalization_bound} assumes that the loss function is Lipschitz continuous. This excludes some popular loss functions such as the relative entropy. Hence, in Appendix~\ref{app_logLipschitz}, we extend Theorem~\ref{thm_generalization_bound} to include loss functions that are log-Lipschitz.\footnote{We say that a function $f$ is log-Lipschitz with constant $L$ if $|f(x) - f(y)| \leq L |x-y| \log(\ee+1/|x-y|)$.}

\subsection{Proof of Theorem~\ref{thm_generalization_bound}}
Let $\cN^{\cB_{\eps}(\theta^\star)}(r)$ denote the number of boxes of side length $r$ required to cover the set $\cB_{\eps}(\theta^\star)$ -- the length being measured with respect to the metric \smash{$\bar F_{ij}(\theta^\star)$}.
\begin{lemma}\label{lem_Hoeffding}
Under the assumption of Theorem~\ref{thm_generalization_bound}, we have for any $\xi \in (0,1)$
\begin{align*}
\mathbb{P}\bigg( \sup_{\theta \in \cB_{\eps}(\theta^\star)} |R(\theta) - R_n(\theta)| \geq \xi \bigg) 
\leq 2\, \mathcal N^{\cB_{\eps}(\theta^\star)}\!\left(\frac{\xi}{4M}  \right) \exp\left(-\frac{n \xi^2}{2B^2}\right) \, .
\end{align*}
\end{lemma}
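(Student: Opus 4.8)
The plan is to run the classical ``$\epsilon$-net plus pointwise concentration'' argument: reduce the supremum over the continuum $\cB_{\eps}(\theta^\star)$ to a finite collection of representative parameters, bound the fluctuation at each representative by Hoeffding's inequality, and control the error introduced by the discretization through Lipschitz continuity.

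First I would discretize. Cover $\cB_{\eps}(\theta^\star)$ by $N:=\mathcal N^{\cB_{\eps}(\theta^\star)}(\tfrac{\xi}{4M})$ boxes $B_1,\dots,B_N$ of side length $\tfrac{\xi}{4M}$ (measured in the metric $\bar F(\theta^\star)$) and pick a representative $\theta_j\in B_j$ in each. The aim of this step is the deterministic event inclusion
\[
\Big\{ \sup_{\theta\in\cB_{\eps}(\theta^\star)}|R(\theta)-R_n(\theta)|\geq \xi\Big\}
\subseteq
\bigcup_{j=1}^{N}\Big\{\,|R(\theta_j)-R_n(\theta_j)|\geq \tfrac{\xi}{2}\,\Big\}.
\]
To justify it I would first establish that both $\theta\mapsto R(\theta)$ and $\theta\mapsto R_n(\theta)$ are $M$-Lipschitz. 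This combines the two hypotheses: $\ell$ is $M_2$-Lipschitz in its first argument with respect to total variation, and $\theta\mapsto p(\cdot,\cdot;\theta)$ is $M_1$-Lipschitz by~\eqref{it_i}, so that termwise $|\ell(p(\cdot,\cdot;\theta),\cdot)-\ell(p(\cdot,\cdot;\theta'),\cdot)|\leq M_2M_1\norm{\theta-\theta'}=M\norm{\theta-\theta'}$. Taking the expectation over $(x,y)\sim p$ gives the bound for $R$, and averaging over the sample $\cS_n$ gives it for $R_n$. Consequently, for $\theta\in B_j$ both $|R(\theta)-R(\theta_j)|$ and $|R_n(\theta)-R_n(\theta_j)|$ are at most $\xi/4$, so the failure of every right-hand event (i.e.\ $|R(\theta_j)-R_n(\theta_j)|<\xi/2$ for all $j$) forces $\sup_{\theta}|R(\theta)-R_n(\theta)|<\xi$; the contrapositive is exactly the inclusion.

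Next I would bound each pointwise event. For a fixed $\theta_j$, the quantity $R_n(\theta_j)$ is the empirical mean of $n$ i.i.d.\ terms taking values in $[-B/2,B/2]$ with expectation $R(\theta_j)$, so Hoeffding's inequality yields $\mathbb{P}(|R(\theta_j)-R_n(\theta_j)|\geq \tfrac{\xi}{2})\leq 2\exp(-2n(\xi/2)^2/B^2)=2\exp(-n\xi^2/(2B^2))$. A union bound over the $N$ representatives then gives the stated factor $2\,\mathcal N^{\cB_{\eps}(\theta^\star)}(\tfrac{\xi}{4M})\exp(-n\xi^2/(2B^2))$, completing the proof.

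The \emph{main obstacle}, or rather the point requiring genuine care, is reconciling the metric in which the Lipschitz estimate lives with the metric in which the covering is counted: the chain above is naturally expressed in the ambient norm on $\theta$, whereas $\mathcal N^{\cB_{\eps}(\theta^\star)}$ enumerates boxes in the Fisher metric $\bar F(\theta^\star)$. One must verify that within each Fisher-metric box of side length $\tfrac{\xi}{4M}$ the oscillation of $R$ (and of $R_n$) is genuinely at most $\xi/4$, which is where the choice of representative and the relation between the two metrics enter; this is also the junction at which the metric choice later feeds into the determinant/volume estimate of $\mathcal N$ used in the proof of Theorem~\ref{thm_generalization_bound}. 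By contrast, the Hoeffding step is routine once the boundedness $\ell\in[-B/2,B/2]$ and the i.i.d.\ structure of $\cS_n$ are invoked, and the bookkeeping of the split $\xi=\tfrac{\xi}{4}+\tfrac{\xi}{2}+\tfrac{\xi}{4}$ is what produces the arguments $\tfrac{\xi}{4M}$ and the exponent $-n\xi^2/(2B^2)$.
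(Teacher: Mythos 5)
Your argument is correct and is essentially the paper's own: the paper proves Lemma~\ref{lem_Hoeffding} by citing \cite[Lemma~2, Supplementary Information]{abbas21}, whose content is exactly your covering-plus-Hoeffding scheme, and the same structure is reproduced verbatim in the paper's proof of the log-Lipschitz analogue (Lemma~\ref{lem_Hoeffding2}). Your bookkeeping ($M$-Lipschitzness of $R$ and $R_n$, oscillation $\xi/2$ of $R-R_n$ within a box of side $\xi/(4M)$, Hoeffding at level $\xi/2$ for variables ranging over an interval of length $B$, then a union bound) matches the intended derivation, and the metric subtlety you flag is glossed over at the same level in the source.
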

\begin{proof}
Follows immediately from~\cite[Lemma~2 in the Supplementary Information]{abbas21}, by replacing $\Theta$ with $\cB_{\eps}(\theta^{\star})$ and setting $\alpha=1$.
\end{proof}
\begin{lemma} \label{lem_coveringED}
Under the assumption of Theorem~\ref{thm_generalization_bound}, there exists a dimensional constant $c_{d}$ such that
\begin{align*}
\mathcal N^{\cB_{\eps}(\theta^\star)}\!\left( \frac{\eps}{\sqrt{\kappa_{n,\gamma}}}\right) 
&\leq c_d (1+\eps \Lambda)^d \cdot  \kappa_{n,\gamma}^{\frac{d_{n,\gamma,\eps}}{2}} \, .
\end{align*}
\end{lemma}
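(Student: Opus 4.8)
The plan is to combine a purely metric covering estimate for the Euclidean ball $\cB_\eps(\theta^\star)$ under the constant metric $\bar F(\theta^\star)$ with a control of how much $\bar F$ varies across $\cB_\eps(\theta^\star)$, the latter coming from the gradient hypothesis $\norm{\nabla_\theta \log \bar F(\theta)}\le\Lambda$. The bridge between the two is the identity obtained by simply exponentiating Definition~\ref{def_local_ED}, namely
\begin{align*}
\kappa_{n,\gamma}^{d_{n,\gamma,\eps}/2} = \frac{1}{V_\eps}\int_{\cB_\eps(\theta^\star)} \sqrt{\det\big(\id_d + \kappa_{n,\gamma}\bar F(\theta)\big)}\,\di\theta \, ,
\end{align*}
so that the target inequality is equivalent to bounding $\cN^{\cB_\eps(\theta^\star)}(\eps/\sqrt{\kappa_{n,\gamma}})$ by $c_d(1+\eps\Lambda)^d$ times the \emph{ball average} of $\sqrt{\det(\id_d+\kappa_{n,\gamma}\bar F(\theta))}$.

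First I would estimate the covering count at the centre. Diagonalising the constant metric $\bar F(\theta^\star)$ and applying the linear change of variables $u=\bar F(\theta^\star)^{1/2}(\theta-\theta^\star)$ turns $\bar F(\theta^\star)$-boxes into Euclidean $r$-cubes and turns $\cB_\eps(\theta^\star)$ into an ellipsoid with semi-axes $\eps\sqrt{\mu_i}$, where $\mu_i$ are the eigenvalues of $\bar F(\theta^\star)$. Covering the enclosing box of this ellipsoid by cubes of side $r=\eps/\sqrt{\kappa_{n,\gamma}}$ costs at most $\prod_{i=1}^d(2\sqrt{\kappa_{n,\gamma}\mu_i}+1)$ cubes. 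The key point is the additive $+1$ per axis (the boundary correction): using the elementary inequality $1+2\sqrt{x}\le 3\sqrt{1+x}$ for $x\ge 0$, this product is bounded by $3^d\prod_i\sqrt{1+\kappa_{n,\gamma}\mu_i}=3^d\sqrt{\det(\id_d+\kappa_{n,\gamma}\bar F(\theta^\star))}$. This is exactly what produces the $\id_d+$ inside the determinant, matching the integrand, and yields $\cN^{\cB_\eps(\theta^\star)}(\eps/\sqrt{\kappa_{n,\gamma}})\le c_d\sqrt{\det(\id_d+\kappa_{n,\gamma}\bar F(\theta^\star))}$ for a dimensional constant $c_d$.

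It then remains to pass from this centre value to the ball average, and here the Lipschitz hypothesis and the factor $(1+\eps\Lambda)^d$ enter. Integrating $\norm{\nabla_\theta\log\bar F}\le\Lambda$ along the segment from $\theta^\star$ to any $\theta\in\cB_\eps(\theta^\star)$ gives $\norm{\log\bar F(\theta^\star)-\log\bar F(\theta)}\le\eps\Lambda$; by Weyl's perturbation inequality the ordered eigenvalues of the two matrix logarithms differ by at most $\eps\Lambda$, i.e.\ $\lambda_i(\bar F(\theta^\star))\le\ee^{\eps\Lambda}\lambda_i(\bar F(\theta))$. Using $\det(\id_d+\kappa_{n,\gamma}A)=\prod_i(1+\kappa_{n,\gamma}\lambda_i(A))$ together with the elementary bound $\frac{1+ax}{1+x}\le a$ for $a\ge 1$, $x\ge 0$, controls the ratio $\det(\id_d+\kappa_{n,\gamma}\bar F(\theta^\star))/\det(\id_d+\kappa_{n,\gamma}\bar F(\theta))$ uniformly in $\theta\in\cB_\eps(\theta^\star)$, so the centre value is dominated by the ball average up to the stated factor; substituting the identity above closes the argument.

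The main obstacle I anticipate is precisely this last step: extracting the clean factor $(1+\eps\Lambda)^d$ rather than the cruder $\ee^{d\eps\Lambda}$ that naive exponentiation of the log-perturbation bound produces. Getting the tighter form requires organising the eigenvalue comparison additively — tracking the quantities $1+\kappa_{n,\gamma}\lambda_i$ directly and exploiting that the gradient bound acts on each of the $d$ coordinate directions separately, so that the per-axis metric distortion reads $1+\eps\Lambda$. The remaining technical work is bookkeeping: ensuring the change-of-variables and boundary-correction overhead (the ellipsoid-versus-box and $3^d$ factors) is absorbed into $c_d$, and checking that the full-rank assumption on $\bar F$ keeps all logarithms and ratios well defined so that no degenerate directions spoil the estimate.
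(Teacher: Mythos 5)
Your route is genuinely different from the paper's: the paper simply rescales $\cB_{\eps}(\theta^\star)\to\cB_1(\theta^\star)$ and invokes the covering estimate of \cite[Lemma~1]{abbas21} as a black box, whereas you attempt a self-contained proof. Your first two steps are sound: the identity $\kappa_{n,\gamma}^{d_{n,\gamma,\eps}/2}=\frac{1}{V_\eps}\int_{\cB_\eps(\theta^\star)}\sqrt{\det(\id_d+\kappa_{n,\gamma}\bar F(\theta))}\,\di\theta$ is exactly Definition~\ref{def_local_ED} exponentiated, and the constant-metric count $\cN^{\cB_\eps(\theta^\star)}(\eps/\sqrt{\kappa_{n,\gamma}})\le 3^d\sqrt{\det(\id_d+\kappa_{n,\gamma}\bar F(\theta^\star))}$ via diagonalization and $1+2\sqrt{x}\le 3\sqrt{1+x}$ is correct.

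The gap is the one you flag yourself, and your proposed repair does not close it. Comparing the whole ball to the single centre point forces you to integrate the bound $\norm{\nabla_\theta\log\bar F}\le\Lambda$ over the full radius $\eps$, and the resulting eigenvalue distortion $\lambda_i(\bar F(\theta^\star))\le\ee^{\eps\Lambda}\lambda_i(\bar F(\theta))$ is genuinely multiplicative: it yields the factor $\ee^{d\eps\Lambda}$, and no additive reorganisation of the quantities $1+\kappa_{n,\gamma}\lambda_i$ can convert $\ee^{\eps\Lambda}$ into $1+\eps\Lambda$ when $\eps\Lambda$ is large (for $\eps\Lambda=10$ the ratio $(\ee^{10}/11)^d$ cannot be absorbed into a constant depending only on $d$). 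The factor $(1+\eps\Lambda)^d$ in the lemma does not come from a global centre-versus-ball comparison at all; it arises from \emph{localizing} the metric comparison. Concretely, one partitions $\cB_\eps(\theta^\star)$ into roughly $(1+\eps\Lambda)^d$ sub-boxes of Euclidean side $\sim\min(\eps,1/\Lambda)$, on each of which the log-gradient bound makes $\bar F$ comparable to its value at that sub-box's own centre up to a \emph{dimensional} constant; your constant-metric count is then applied on each sub-box and the sum becomes a Riemann sum for the integral $\frac{1}{V_\eps}\int_{\cB_\eps(\theta^\star)}\sqrt{\det(\id_d+\kappa_{n,\gamma}\bar F(\theta))}\,\di\theta$, with the number of sub-boxes supplying the $(1+\eps\Lambda)^d$ (this is precisely the content of \cite[Lemma~1]{abbas21} that the paper rescales). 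If you replace your Step~3 by this partition argument, keeping your Step~2 as the per-cell estimate, the proof goes through; as written, it only proves the lemma with $\ee^{d\eps\Lambda}$ in place of $(1+\eps\Lambda)^d$, which is a strictly weaker statement.
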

\begin{proof}
We first note that Lemma~1 in~\cite{abbas21} remains valid when choosing $\Theta=\cB_{1}(\theta^\star)$ instead of $[-1,1]^d$. We then rescale $\cB_{\eps}(\theta^\star) \to \cB_{1}(\theta^\star)$, $\bar F(\theta) \to \bar F(\eps \theta)$, $1/\sqrt{n} \to 1/(\eps \sqrt{n})$, and $r \to r/\eps$.\footnote{Recall that by assumption of Theorem~\ref{thm_generalization_bound}, we have $\eps>1/\sqrt{n}$.}
In other words, the number of balls of radius $r$ needed to cover $\cB_{\eps}(\theta^\star)$ is equal to the number of balls of radius $r/\eps$ needed to cover $\cB_{1}(\theta^\star)$.
From~\cite[Lemma~1]{abbas21} we know that constants $c_d$ and $\hat c_d$ exist such that
\begin{align*}
    \cN^{\cB_{\eps}(\theta^\star)}(r)
    &=\cN^{\cB_{1}(\theta^\star)}(r/\eps)\\
    &\leq \hat c_d (1+c_d \eps \Lambda)^d \frac{1}{V_{1}} \int_{\cB_{1}(\theta^\star)} \sqrt{\det\Big(\id_d + \frac{\eps^2}{r^2}\bar F(\eps \theta)\Big)} \di \theta \\
    &=\hat c_d (1+c_d \eps \Lambda)^d \frac{1}{V_\eps} \int_{\cB_{\eps}(\theta^\star)} \sqrt{\det\Big(\id_d + \frac{\eps^2}{r^2}\bar F(\theta)\Big)} \di \theta \, .
\end{align*}
Hence, choosing $(\eps/r)^2 = \kappa_{n,\gamma}$ gives
\begin{align*}
    \cN^{\cB_{\eps}(\theta^\star)}\left(\frac{\eps}{\sqrt{\kappa_{n,\gamma}}} \right)
    \leq c_d (1+c_d \eps \Lambda)^d \cdot \kappa_{n,\gamma}^{\frac{d_{n,\gamma,\eps}}{2}} \, ,
\end{align*}
which proves the assertion of the lemma.
\end{proof}

Thanks to Lemmas~\ref{lem_Hoeffding} and~\ref{lem_coveringED} we can deduce Theorem~\ref{thm_generalization_bound}. For $\xi = 4M \eps/\sqrt{\kappa_{n,\gamma}}$ we find
\begin{align*}
&\mathbb{P}\left(  \sup_{\theta \in \cB_{\eps}(\theta^\star)} | R(\theta) -  R_n(\theta) |  
\geq 4M \eps/\sqrt{\kappa_{n,\gamma}} \right)\\
&\hspace{40mm}\leq 2\, \cN^{\cB_\eps(\theta^\star)}\left( \eps/\sqrt{\kappa_{n,\gamma}}  \right) \exp\left(-\frac{16 \pi  M^2  \eps^{2} \log n}{B^2\gamma} \right)\\
&\hspace{40mm}\leq 2 c_d (1+\eps \Lambda)^d  \cdot  \kappa_{n,\gamma}^{\frac{d_{n,\gamma,\eps}}{2}} \exp\left(-\frac{16 \pi  M^2  \eps^{2} \log n}{B^2\gamma} \right) \, ,
\end{align*}
which completes the proof.\footnote{The full rank assumption of the Fisher information matrix in Theorem~\ref{thm_generalization_bound} can be relaxed following the ideas from~\cite[Remark~2]{abbas21}.}  \qed
\subsection{Remarks on the generalization error bound} \label{sec_remark_genBound}
Lemma~\ref{lem_Hoeffding} implies that $\lim_{n \to \infty} \PP(\sup_{\theta \in \cB_{\eps}(\theta^\star)} | R(\theta) -  R_n(\theta) | \geq \xi) = 0$ for $\xi \in (0,1)$.
As a result, to ensure that the generalization bound in~\eqref{eq_generalization_bound} is meaningful, the right-hand side must vanish as $n\to \infty$.
This depends on the problem setting, in particular, on the parameters $\eps > 1/\sqrt{n}$, and $\gamma \in (\frac{2\pi \log n}{n},1]$. There is some flexibility in choosing these parameters, with a ``critical" scaling obtained if $\eps = \Omega(1/\log(n))$.\footnote{In this case, the two terms in the right-hand side of~\eqref{eq_generalization_bound} balance each other out and the hyperparameter $\gamma$ can control the behaviour of the generalization bound as $n \to \infty$.}
In the case where $\eps = O(1/n^p)$ for $p<\frac{1}{2}$, the generalization bound gets vacuous for sufficiently large $n$, regardless of the choice of the constant $\gamma \in (\frac{2\pi \log n}{n},1]$.\footnote{Recall that choosing $\gamma$ dependent on $n$ would conflict with the geometric interpretation of the effective dimension~\cite{Figalli20,abbas21}.} 

Ideally, the generalization bound from~\eqref{eq_generalization_bound} should be non-vacuous. This occurs if the right-hand side is smaller than one, or equivalently, when the logarithm of the right-hand side is negative. Table~\ref{tab_genBound} demonstrates that a choice for $\gamma \in (\frac{2\pi \log n}{n},1]$ such that the bound remains non-vacuous, is reasonable in practical settings where we set $\gamma = 0.003$. As a result, we plot the accompanying error bound $\xi_n$, the local effective dimension and the logarithm of the right-hand side of~\eqref{eq_generalization_bound} which remains negative, for increasing values of $n$.


\begin{table}[!htb]
\centering
\def\arraystretch{1.1}
\begin{tabular}{|c c c c|}
\hline 
$n$ &  $d_{n,\gamma,\eps}$ & $\xi_n$ & log RHS of~\eqref{eq_generalization_bound} \\ \hline
$5 \times 10^5$ & $23474$ & $0.00132$  & $-98507$ \\
$10^6$ & $25285$ & $0.00068$  & $-91345$ \\
$2\times 10^6$ & $27594$ & $0.00034$ & $-79921$ \\
$5\times 10^6$ & $31106$ & $0.00014$ & $-59307$ \\
$10^7$ & $33933$ & $0.00007$  & $-40316$ 
\\ \hline
\end{tabular}\caption{\textbf{Evaluation of the generalization bound}~\eqref{eq_generalization_bound} for a feedforward neural network trained on MNIST with $d\approx 10^5$, $\eps=1/\sqrt{n}$, $c_{\Lambda,d}=2\sqrt{d}$, and $B=M=1$. Even when setting $\eps=1/\sqrt{n}$, we can still fix $\gamma = 0.003$ such that the generalization bound is non-vacuous, i.e.,~the RHS of~\eqref{eq_generalization_bound} is $\leq 1$. In fact, the log RHS of~\eqref{eq_generalization_bound} is strongly negative, implying that the RHS is virtually zero. Following~\eqref{eq_generalization_bound}, the error bound is given by $\xi_n=4M \eps(\frac{2\pi \log n}{\gamma n})^{1/2} = \frac{4 \sqrt{2\pi \log n}}{n \sqrt{\gamma}} \sim 1/n$. \label{tab_genBound}}
\end{table}

\section{Empirical results} \label{sec_numerics}
In this section, we perform experiments to verify whether the local effective dimension captures the true behaviour of generalization error in various regimes. We use standard fully-connected feedforward neural networks with two hidden layers and vary the model size by altering the number of neurons in the hidden layers. All training was conducted with batched stochastic gradient descent, with experimental setups identical to those of~\cite{fisherraonorm}. The details can be found in Appendix~\ref{experimental_setup} and the full implementation of the code in~\cite{abbas2021code}. 

We consider both shallow and deep regimes by training models on MNIST and CIFAR10 data sets, with the latter requiring far more parameters for the training to converge to zero error. Within these regimes, we conduct two experiments respectively: first, we incrementally increase the model size, train to zero error and calculate the local effective dimension, along with the generalization error; second, we replicate the experiment from~\cite{zhang2017understanding} by fixing the model size and randomizing the training labels by an increasing proportion, training to zero error and calculating the local effective dimension and generalization error. 

In all calculations, we perform simulations using the K-FAC approximation of the Fisher information from~\cite{martens2015optimizing}. K-FAC crucially allows computation of the local effective dimension in very large parameter spaces and we further exploit the block structure of this approximation for computation of the eigenvalues of the Fisher information matrix~\cite{george_nngeometry} (see Appendix~\ref{experimental_setup} for more details).

\begin{figure}[!htb]
\centering
\begin{subfigure}{.48\textwidth}
  \centering
  \includegraphics[width=1\linewidth]{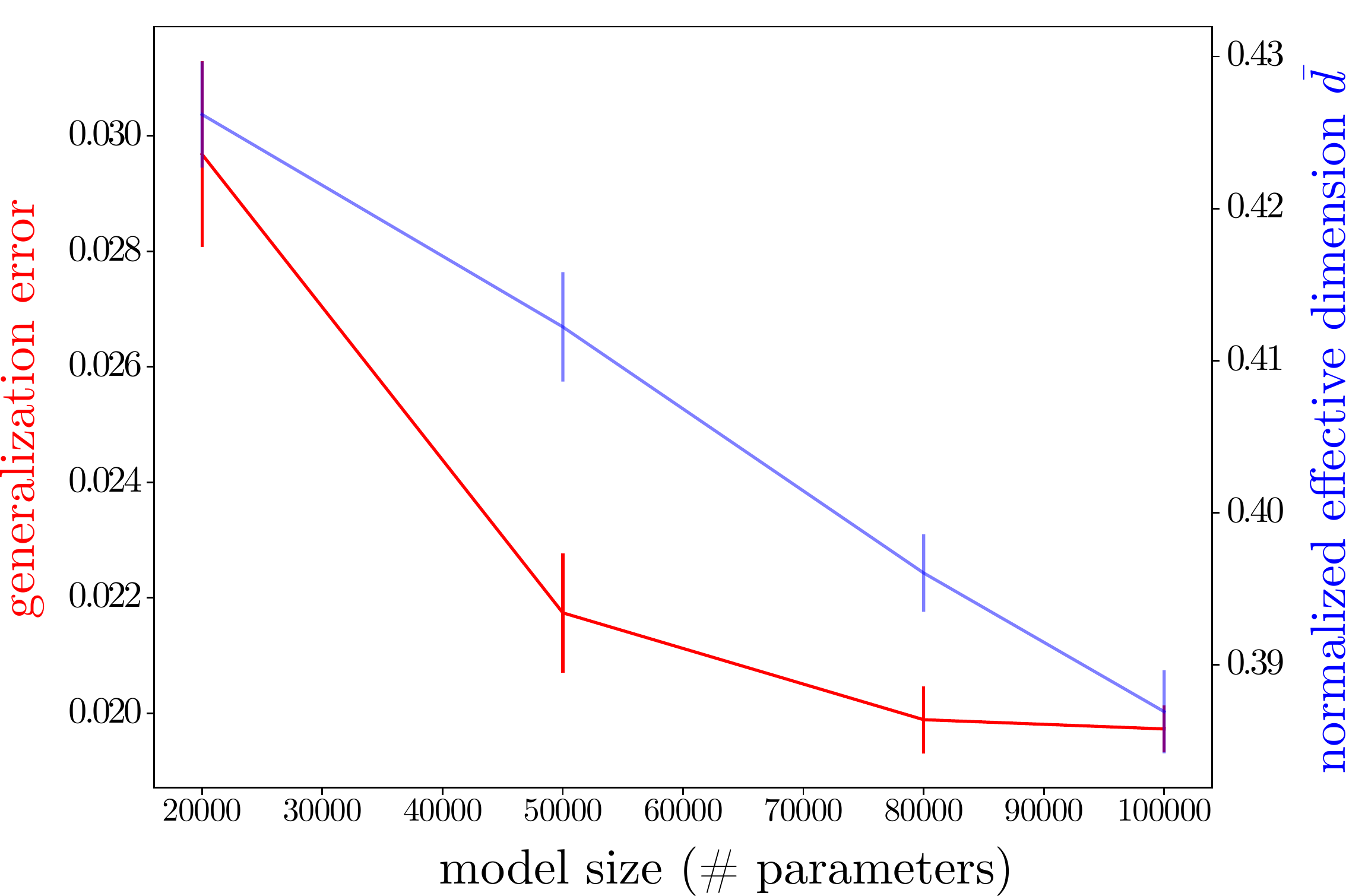}
  \caption{Model size}
  \label{fig:sub1}
\end{subfigure}%
\hspace{3mm}
\begin{subfigure}{.48\textwidth}
  \centering
  \includegraphics[width=1\linewidth]{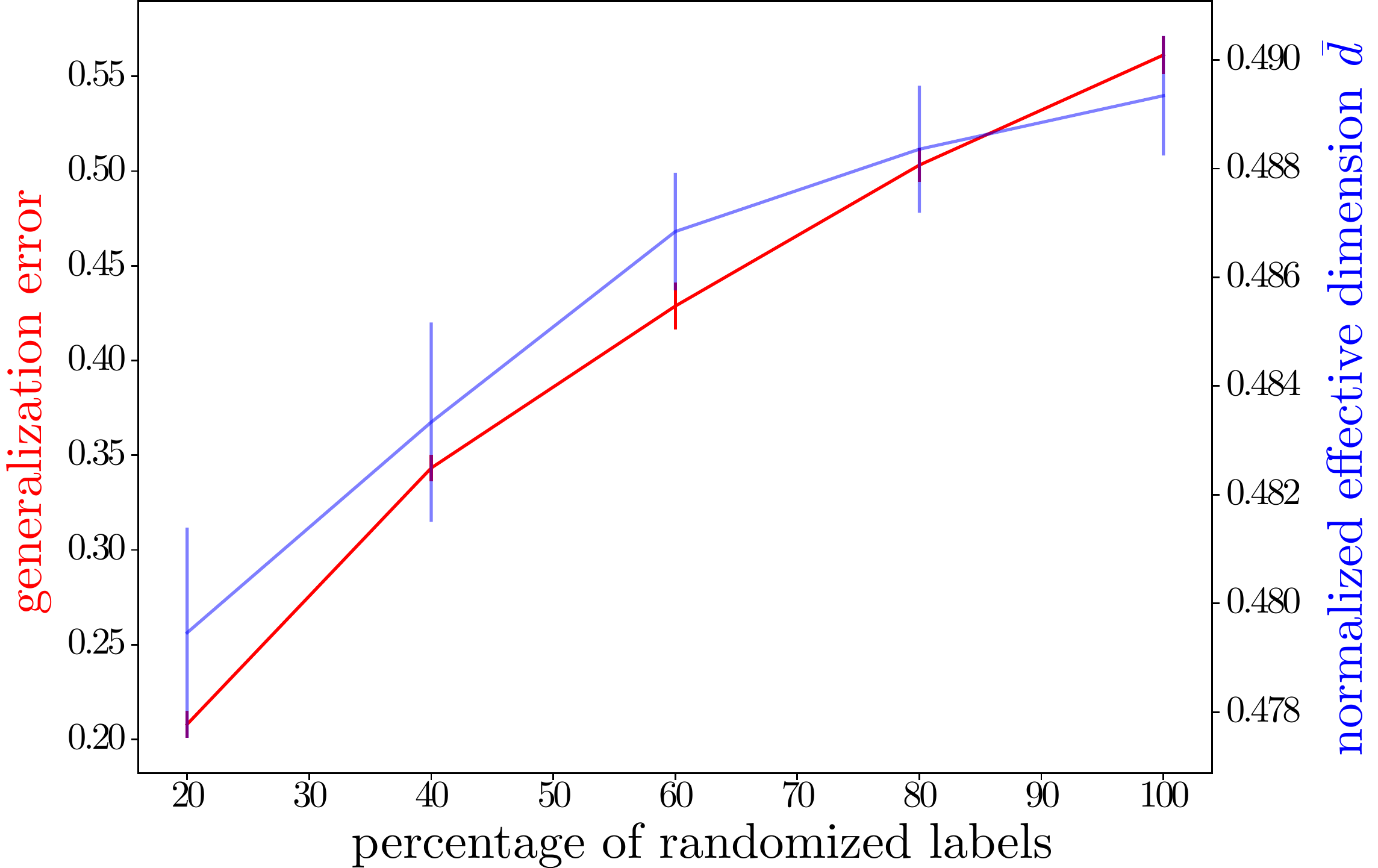}
  \caption{Random labels}
  \label{fig:sub2}
\end{subfigure}
\caption{ \textbf{MNIST} (a) Normalized local effective dimension and generalization error plotted over different model sizes (standard feedforward networks with two hidden layers and varying number of neurons). The parameter $n$ is fixed to equal the size of the training set, i.e.~$n = 60000$. (b) Normalized local effective dimension and generalization error over different percentages of randomized labels on the training data. Here, the model size is fixed to $d \approx 10^5$.}
\label{fig:mnist}
\end{figure}

\begin{figure}[!htb]
\centering
\begin{subfigure}{.48\textwidth}
  \centering
  \includegraphics[width=1\linewidth]{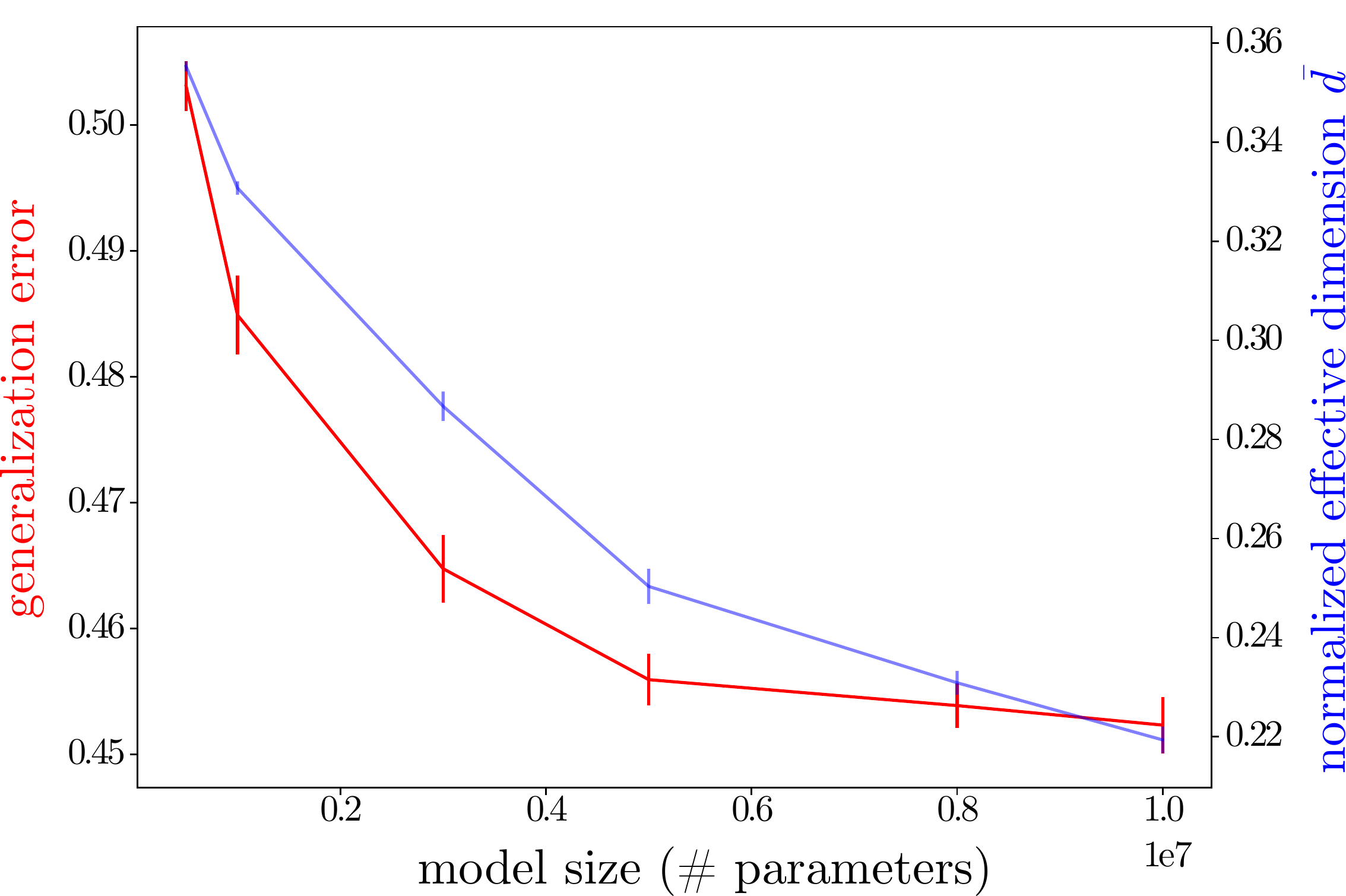}
  \caption{Model size}
  \label{fig:sub3}
\end{subfigure}%
\hspace{3mm}
\begin{subfigure}{.48\textwidth}
  \centering
  \includegraphics[width=1\linewidth]{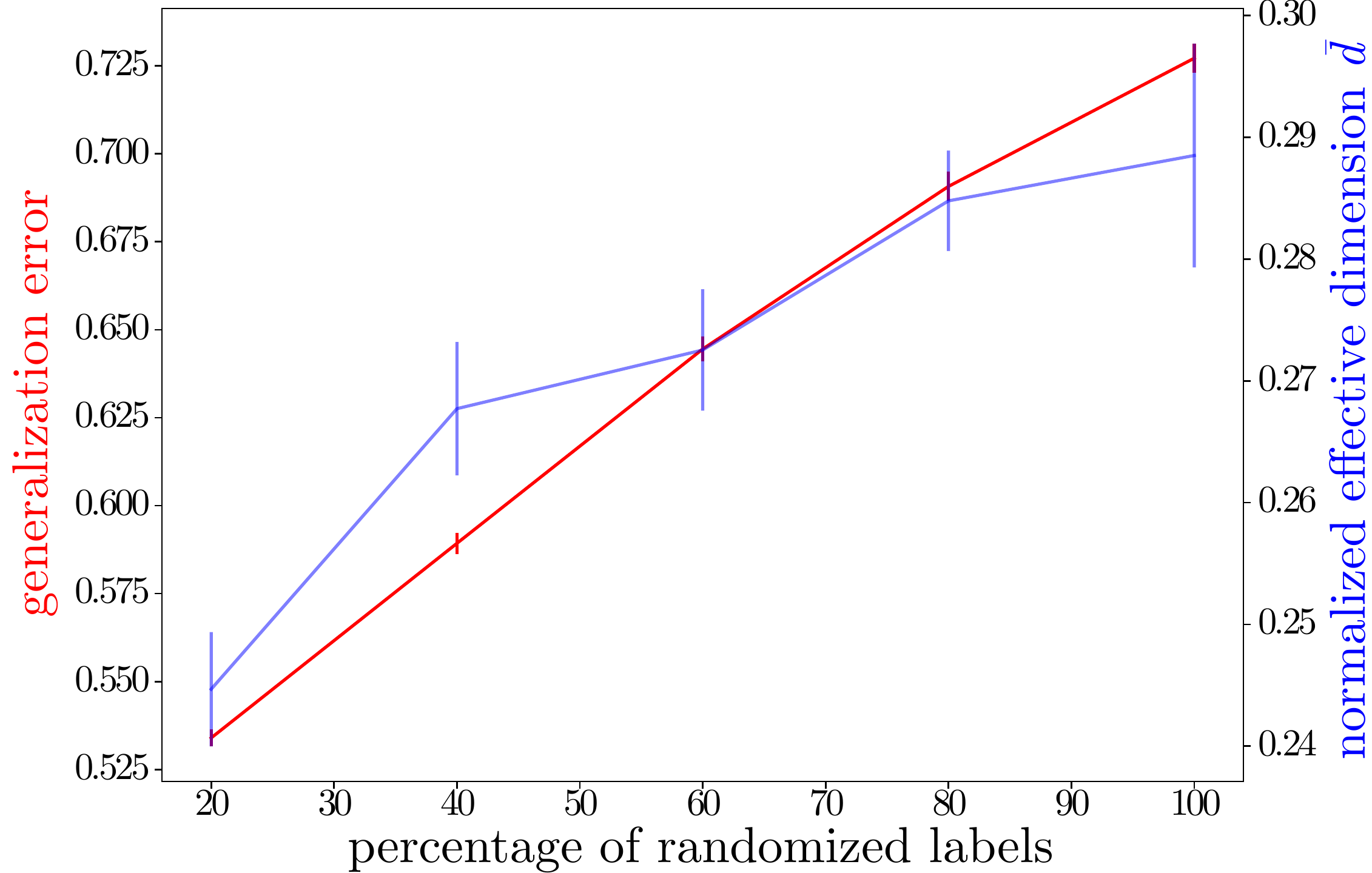}
  \caption{Random labels}
  \label{fig:sub4}
\end{subfigure}
\caption{ \textbf{CIFAR10} (a) Normalized local effective dimension and generalization error plotted over different model sizes. The number of parameters required to train CIFAR10 is far greater than the number of training samples ($n = 60000$). We observe that the local effective dimension moves in line with the declining generalization error as the model is made larger. (b) Normalized local effective dimension and generalization error over different percentages of randomized labels on the training data. Here, the model size is fixed to $d \approx 10^7$. }
\label{fig:cifar}
\end{figure}

Regardless of the regime and particular experiment conducted, the local effective dimension seems to move in line with the generalization error. In Figures~\ref{fig:sub1} and~\ref{fig:sub3}, we see this for an increasing model size shown on the horizontal axes (with notably much larger models used to learn the CIFAR10 data set). As the models get larger, they are able to perform better on the learning task at hand and their generalization error declines accordingly, as does the (normalized) local effective dimension. The error bars represent the standard deviation around the mean of $10$ independent training runs. A lower normalized local effective dimension as the model size increases, intuitively implies increasing redundancy, as also suggested in~\cite{frankle2018lottery}, and motivates pruning techniques~\cite{karnin1990simple}. 

In Figures~\ref{fig:sub2} and~\ref{fig:sub4}, we fix the model size to $d \approx 10^5$ and $d \approx 10^7$ respectively. Here, the horizontal axis marks the level at which the labels of the training data have been randomized. We begin at $20\%$ randomization to $100\%$ in increments of $20\%$. At all points, we train to zero training loss or terminate at $600$ epochs and plot the resulting normalized local effective dimension and generalization error. Naturally, the generalization performance worsens as we randomize more labels since the network is fitting more and more noise that has been artificially introduced. Interestingly, the local effective dimension captures this behaviour too - increasing with the generalization error - indicating that more and more parameters need to become ``active'' to fit this noise. This result is independent of the regime, deep or shallow.

\section{Discussion}

Whilst the search for a good capacity measure continues, we believe that the local effective dimension serves as a promising candidate. Besides being able to correlate with the generalization error in different experiments, the local effective dimension incorporates data, training and does not rest on unrealistic assumptions. It's intuitive interpretation as a measure of redundancy in a model, along with proof of a generalization error bound, suggests that the local effective dimension can explain the performance of machine learning models in various regimes. 

Investigation into the tightness of the generalization bound, in particular for specific model architectures and in the deep learning regime (where bounds are typically vacuous), would be beneficial in further understanding the local effective dimension's connection to generalization. Additionally, empirical analyses involving bigger models, different data sets and other training techniques/optimizers could shed more light on the practical usefulness of this promising capacity measure.

\paragraph{Acknowledgements}
We thank Steve James for helpful resources regarding the experiments.  
We further thank Maria Schuld and Tobias Sutter for comments on an earlier version of this manuscript.

\appendix
\section{Proof of Proposition~\ref{prop_continuity_ED}} \label{app_proof_continuity}
We denote the maximal rank of $F$ and $F'$ by $\bar r$ and $\bar r'$, respectively, and define the function
\begin{align} \label{eq_def_f}
    f(t):=\frac{1}{V_{\Theta}} \int_{\Theta} \det\Big( \id_d/\sqrt{\kappa_{n,\gamma}} + \underbrace{t \sqrt{\bar F(\theta)} + (1-t)\sqrt{\bar F'(\theta)}}_{=:G_t} \Big) \di \theta \, .
\end{align}
We consider a modified version of the effective dimension, defined as 
\begin{align*} 
    \tilde d_{n,\gamma}(F):= \frac{2 \log f(1)}{\log \kappa_{n,\gamma}} + \bar r 
    \qquad
    \textnormal{and}
    \qquad
    \tilde d_{n,\gamma}(F'):= \frac{2 \log f(0)}{\log \kappa_{n,\gamma}} + \bar r' \, .
\end{align*}
The triangle inequality then gives
\begin{align}
|d_{n,\gamma}(F) - d_{n,\gamma}(F')| 
\leq |d_{n,\gamma}(F) - \tilde d_{n,\gamma}(F)| + |\tilde d_{n,\gamma}(F) - \tilde d_{n,\gamma}(F')| + |\tilde d_{n,\gamma}(F') - d_{n,\gamma}(F')| \, . \label{eq_triangle}
\end{align}
We next bound all the three terms. For the first and the last one, recall~\cite[Supplementary Information, Section~2]{abbas21} that
\begin{align*}
  d_{n,\gamma}(F) \leq \bar r + \frac{2}{ \log \kappa_{n,\gamma}} \log \left( \frac{1}{V_{\Theta}} \int_{\Theta} \sqrt{\det(\id_d + \bar F(\theta))} \di \theta \right) 
\end{align*}
and for $\cA:=\{\theta \in \Theta: r_{\theta}=\bar r\}$
\begin{align*}
  d_{n,\gamma}(F) 
  \geq \bar r + \frac{2 }{ \log \kappa_{n,\gamma}}\log \left( \frac{1}{V_{\Theta}} \int_{\Theta} \sqrt{\det(\bar F(\theta))} \di \theta \right)  \, .
\end{align*}
Recalling that 
\begin{align*}
    \psi(F)= \max \Big \{ \log \left( \frac{1}{V_{\Theta}} \int_{\Theta} \sqrt{\det(\id_d + \bar F(\theta))} \di \theta \right), - \log \left( \frac{1}{V_{\Theta}} \int_{\Theta} \sqrt{\det(\bar F(\theta))} \di \theta \right) \Big \}
\end{align*}
gives $|d_{n,\gamma}(F) - \tilde d_{n,\gamma}(F)| \leq \frac{2\psi(F)}{\log \kappa_{n,\gamma}}$ and $|d_{n,\gamma}(F') - \tilde d_{n,\gamma}(F')| \leq \frac{2\psi(F')}{\log \kappa_{n,\gamma}}$.
It thus remains to bound middle term in~\eqref{eq_triangle}.
To do so note that
\begin{align*} 
    |\log f(1)- \log f(0)| \leq \int_{0}^1 \frac{|f'(t)|}{f(t)} \di t \, .
\end{align*}
We can bound the numerator of the integral as
\begin{align}
   |f'(t)|
   &\leq \frac{1}{V_{\Theta}} \int_{\Theta} \left|\frac{\di}{\di t} \det(\id_d/\sqrt{\kappa_{n,\gamma}}+G_t)\right| \di \theta \nonumber \\
   &\leq  \frac{1}{V_{\Theta}} \int_{\Theta} C_{\theta,d} \norm{\sqrt{\bar F(\theta)} - \sqrt{\bar F'(\theta)}} \di \theta \nonumber\\
   &\leq C_d \max_{\theta \in \Theta}  \norm{\sqrt{\bar F(\theta)} - \sqrt{\bar F'(\theta)}} \, , \label{eq_step1_af}
\end{align}
where the constant $C_d$ depends on $d$, $\|\sqrt{\bar F}\|^{d-1}$, and $\|\sqrt{\bar F'}\|^{d-1}$. 
Using the fact that $A \mapsto (\det A)^{1/d}$ is concave on the space of Hermitian positive definite matrices (see e.g.~\cite[Corollary II.3.21]{bhatia_book}) gives
\begin{align*}
    \det\big(\id_d/\sqrt{\kappa_{n,\gamma}} + G_t \big)
    &\geq \left(t \det\big(\id_d/\sqrt{\kappa_{n,\gamma}} + G_1 \big)^{1/d} +(1-t)\det\big(\id_d/\sqrt{\kappa_{n,\gamma}} + G_0 \big)^{1/d} \right)^d \\
    &\geq t^d \det\big(\id_d/\sqrt{\kappa_{n,\gamma}} + G_1 \big) + (1-t)^d \det\big(\id_d/\sqrt{\kappa_{n,\gamma}} + G_0 \big) \, .
\end{align*}
Hence we have $f(t) \geq t^d f(1) + (1-t)^d f(0)$. Combining this with~\eqref{eq_step1_af} gives
\begin{align*}
  |\log f(1)- \log f(0)| 
  &\leq   C_d \max_{\theta \in \Theta}  \norm{\sqrt{\bar F(\theta)} - \sqrt{\bar F'(\theta)}} \int_0^1 \frac{1}{t^d f(1) + (1-t)^d f(0)} \di t \\
  &\leq C_d \max_{\theta \in \Theta}  \norm{\sqrt{\bar F(\theta)} - \sqrt{\bar F'(\theta)}} \left( \int_0^{1/2} \frac{1}{t^d f(1)} \di t + \int_{1/2}^1 \frac{1}{(1-t)^d f(0)} \di t\right) \\
  &\leq C_d \max_{\theta \in \Theta}  \norm{\sqrt{\bar F(\theta)} - \sqrt{\bar F'(\theta)}} \left( \frac{1}{f(0)} + \frac{1}{f(1)} \right) \, .
\end{align*}
Combining this with
\begin{align*}
    |\tilde d_{n,\gamma}(F) - \tilde d_{n,\gamma}(F')| \leq \frac{2}{\kappa_{n,\gamma}} |\log f(1)- \log f(0)|  \, 
\end{align*}
almost completes the proof. The final thing to note is that 
\begin{align*}
f(0) 
= \frac{1}{V_{\Theta}} \int_{\Theta} \det\Big( \id_d/\sqrt{\kappa_{n,\gamma}} +\sqrt{\bar F'(\theta)}\Big) \di \theta
\geq \frac{1}{V_{\Theta}} \int_{\Theta} \det\Big(\sqrt{\bar F'(\theta)}\Big) \di \theta \, ,
\end{align*}
and similarly for $f(1)$. \qed
\section{Generalization bound for log-Lipschitz loss functions} \label{app_logLipschitz}
In this appendix we prove a generalization of Theorem~\ref{thm_generalization_bound} where the loss function is assumed to be log-Lipschitz continuous instead of Lipschitz continuous.
\begin{theorem} \label{thm_generalization_bound_logLip}
Consider the same setting as in Theorem~\ref{thm_generalization_bound} with $\eps \in (1/\sqrt{n},1]$, but the loss function is log-Lipschitz continuous with constant $M_2$ in the first argument with respect to the total variation distance. Then
\begin{align} 
&\mathbb{P}\left(  \sup_{\theta \in \cB_{\eps}(\theta^\star)} | R(\theta) -  R_n(\theta) | \geq \frac{2M \eps}{\sqrt{\kappa_{n,\gamma}}} \log\left(\ee+\frac{\sqrt{\kappa_{n,\gamma}}}{M_2\eps}\right)   \right)  \nonumber \\
 &\hspace{40mm}\leq c_{d}(1+\eps \Lambda)^d \cdot  \kappa_{n,\gamma}^{\frac{d_{n,\gamma,\eps}}{2}}\exp\left(-\frac{2nM^2 \eps^2}{\kappa_{n,\gamma} B^2} \Big(\log\Big( \ee + \frac{\sqrt{\kappa_{n,\gamma}}}{M_2 \eps} \Big) \Big)^2  \right) \, , \label{eq_generalization_bound_logLip}
\end{align}
where $M=M_1 M_2$ and $\kappa_{n,\gamma}$ is defined in~\eqref{eq_kappa}.
\end{theorem}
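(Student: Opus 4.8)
The plan is to follow the proof of Theorem~\ref{thm_generalization_bound} essentially verbatim, isolating the single place where log-Lipschitz continuity enters. The geometric covering estimate of Lemma~\ref{lem_coveringED} makes no reference to the loss function: it only relates the covering number $\cN^{\cB_{\eps}(\theta^\star)}(\eps/\sqrt{\kappa_{n,\gamma}})$ to the local effective dimension $d_{n,\gamma,\eps}$ via the metric $\bar F(\theta^\star)$. Hence it carries over unchanged and will again supply the prefactor $c_d(1+\eps\Lambda)^d\,\kappa_{n,\gamma}^{d_{n,\gamma,\eps}/2}$. The only ingredient that must be replaced is the concentration estimate of Lemma~\ref{lem_Hoeffding}, whose proof (cited from~\cite[Lemma~2]{abbas21}) couples a Hoeffding bound at the centers of a covering with a \emph{Lipschitz} extension to the rest of each ball. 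I would re-run that argument with the Lipschitz modulus replaced by the log-Lipschitz modulus.

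Concretely, I would establish a log-Lipschitz analogue of Lemma~\ref{lem_Hoeffding}. Cover $\cB_{\eps}(\theta^\star)$ by $\cN^{\cB_{\eps}(\theta^\star)}(r)$ balls of radius $r$ in the metric $\bar F(\theta^\star)$; at each center $\theta_i$, Hoeffding's inequality applied to the $n$ i.i.d.\ summands taking values in $[-B/2,B/2]$ controls $|R(\theta_i)-R_n(\theta_i)|$. To pass from the centers to all of $\cB_{\eps}(\theta^\star)$, I would bound $|R(\theta)-R(\theta_i)|$ and $|R_n(\theta)-R_n(\theta_i)|$ using that $\theta\mapsto p(\cdot,\cdot;\theta)$ is $M_1$-Lipschitz and that $\ell$ is log-Lipschitz with constant $M_2$ in the total-variation distance. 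The essential analytic fact is that $s\mapsto s\log(\ee+1/s)$ is increasing and concave, so a bound $d_{\mathrm{TV}}\leq \mathcal{O}(r)$ at the covering scale propagates to a loss oscillation of order $M\,r\,\log(\ee+1/(M_2 r))$ with $M=M_1M_2$. Splitting the threshold appropriately between this covering deviation and the Hoeffding threshold then yields the target bound
\begin{align*}
\mathbb{P}\Big( \sup_{\theta \in \cB_{\eps}(\theta^\star)} |R(\theta)-R_n(\theta)| \geq 2Mr\log(\ee+1/(M_2 r)) \Big)
\leq 2\,\cN^{\cB_{\eps}(\theta^\star)}(r)\,\exp\Big( -\frac{n\big(2Mr\log(\ee+1/(M_2 r))\big)^2}{2B^2} \Big).
\end{align*}

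Finally, I would set $r=\eps/\sqrt{\kappa_{n,\gamma}}$, so that $1/(M_2 r)=\sqrt{\kappa_{n,\gamma}}/(M_2\eps)$ reproduces exactly the logarithmic argument in~\eqref{eq_generalization_bound_logLip} and the threshold becomes $\xi:=\tfrac{2M\eps}{\sqrt{\kappa_{n,\gamma}}}\log\big(\ee+\tfrac{\sqrt{\kappa_{n,\gamma}}}{M_2\eps}\big)$. Substituting the covering bound from Lemma~\ref{lem_coveringED} turns the prefactor into $c_d(1+\eps\Lambda)^d\,\kappa_{n,\gamma}^{d_{n,\gamma,\eps}/2}$ (absorbing the factor $2$ into $c_d$), and a direct computation confirms that $n\xi^2/(2B^2)$ equals the exponent displayed in~\eqref{eq_generalization_bound_logLip}. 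The main obstacle is the extension step: unlike the Lipschitz case, $s\mapsto s\log(\ee+1/s)$ is nonlinear and non-Lipschitz at $s=0$, so one must invoke its monotonicity and concavity to control the loss oscillation over each covering ball, and carefully track where $M_1$ and $M_2$ enter when converting parameter distances into total-variation distances and then into loss deviations, in particular to justify the placement of $M_2$ inside the logarithm.
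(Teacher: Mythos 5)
Your proposal matches the paper's own proof essentially step for step: the paper likewise keeps Lemma~\ref{lem_coveringED} unchanged, proves a log-Lipschitz analogue of Lemma~\ref{lem_Hoeffding} by combining Hoeffding's inequality at the covering centers with the oscillation bound $|S(\theta)-S(\theta_i)|\leq 2M_1M_2 r\log(\ee+1/(M_2 r))$ (using monotonicity of $x\mapsto x\log(\ee+1/x)$ and H\"older), and then sets $r=\eps/\sqrt{\kappa_{n,\gamma}}$ to recover~\eqref{eq_generalization_bound_logLip}. The approach and all key ingredients coincide, so no further comparison is needed.
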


To prove Theorem~\ref{thm_generalization_bound_logLip} we need a preparatory lemma.
\begin{lemma}\label{lem_Hoeffding2}
Under the assumption of Theorem~\ref{thm_generalization_bound_logLip}, we have for any $\xi \in (0,1)$
\begin{align*}
\mathbb{P}\bigg( \sup_{\theta \in \cB_{\eps}(\theta^\star)} |R(\theta) - R_n(\theta)| \geq \xi \bigg) 
\leq 2\, \mathcal N^{\cB_{\eps}(\theta^\star)}\!\left(r\right) \exp\left(-\frac{n \xi^2}{2B^2}\right) \, ,
\end{align*}
where $r=r(\xi)$ is defined as the unique value such that $2M_1M_2 r \log(\ee+\frac{1}{M_2r})=\xi/2$.\footnote{With the convention that $r=\infty$ if $M_1=0$ or $M_2=0$.}
\end{lemma}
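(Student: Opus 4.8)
The plan is to reproduce the covering-number argument behind Lemma~\ref{lem_Hoeffding} (i.e.~\cite[Lemma~2]{abbas21} with $\alpha=1$), but with the linear Lipschitz modulus replaced by the log-Lipschitz one. Fix $\xi\in(0,1)$ and cover $\cB_\eps(\theta^\star)$ by $\cN:=\cN^{\cB_\eps(\theta^\star)}(r)$ boxes of side length $r$ measured in the metric $\bar F(\theta^\star)$, with centers $\theta_1,\dots,\theta_{\cN}$. For an arbitrary $\theta$ lying in the box around $\theta_k$ I would use the splitting
\begin{align*}
|R(\theta)-R_n(\theta)| \le |R(\theta)-R(\theta_k)| + |R(\theta_k)-R_n(\theta_k)| + |R_n(\theta_k)-R_n(\theta)| \, ,
\end{align*}
so that $\sup_{\theta\in\cB_\eps(\theta^\star)}|R(\theta)-R_n(\theta)|$ is controlled by $\max_k|R(\theta_k)-R_n(\theta_k)|$ plus twice the worst-case oscillation of the risk inside a single box.

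The decisive step is bounding this oscillation. For $\theta,\theta'$ in a common box I would first invoke the $M_1$-Lipschitz continuity~\eqref{it_i} to bound the total variation distance $\|p(\cdot,\cdot;\theta)-p(\cdot,\cdot;\theta')\|_{\mathrm{TV}}$ by a constant multiple of the side length $r$, and then feed this into the $M_2$-log-Lipschitz continuity of $\ell$ in its first argument. Since $R$ and $R_n$ are (empirical) expectations of $\ell$, the resulting estimate transfers verbatim to $|R(\theta)-R(\theta')|$ and $|R_n(\theta)-R_n(\theta')|$. Because the modulus $s\mapsto M_2\,s\log(\ee+1/s)$ is monotone increasing in $s$, the unknown in-box total variation distance can be replaced by its bound in terms of $r$, yielding an oscillation of order $M_1M_2\,r\log(\ee+1/(M_2 r))$. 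Requiring twice this quantity (one factor each for $R$ and $R_n$) not to exceed $\xi/2$ is precisely the defining relation $2M_1M_2\,r\log(\ee+1/(M_2 r))=\xi/2$, which forces the choice $r=r(\xi)$.

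Before using $r(\xi)$ I would verify that it is well defined: the map $r\mapsto 2M_1M_2\,r\log(\ee+1/(M_2 r))$ is continuous and strictly increasing on $(0,\infty)$ (the substitution $u=M_2 r$ reduces this to the monotonicity of $u\mapsto u\log(\ee+1/u)$), it tends to $0$ as $r\to 0^+$ and to $\infty$ as $r\to\infty$, so for each $\xi\in(0,1)$ there is a unique solution, while the degenerate cases $M_1=0$ or $M_2=0$ make the left-hand side vanish and give $r=\infty$, matching the stated convention. With $r=r(\xi)$ the in-box oscillation of $R-R_n$ is at most $\xi/2$, so the event $\{\sup_\theta|R(\theta)-R_n(\theta)|\ge\xi\}$ forces $|R(\theta_k)-R_n(\theta_k)|\ge\xi/2$ for some center. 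Finishing is then routine: as $\ell\in[-B/2,B/2]$, each $R_n(\theta_k)$ is an average of $n$ i.i.d.~bounded terms, so Hoeffding's inequality gives $\mathbb P(|R(\theta_k)-R_n(\theta_k)|\ge\xi/2)\le 2\exp(-n\xi^2/(2B^2))$, and a union bound over the $\cN^{\cB_\eps(\theta^\star)}(r)$ centers yields the stated estimate.

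I expect the oscillation estimate to be the main obstacle, specifically transferring the log-Lipschitz modulus through the composition $\theta\mapsto p(\cdot,\cdot;\theta)\mapsto\ell(\cdot)$ while keeping the constants in the exact form dictated by the implicit equation. One must combine the $M_1$-Lipschitz bound on the total variation distance with the $M_2$-log-Lipschitz bound on $\ell$ and use monotonicity of $s\mapsto s\log(\ee+1/s)$ to eliminate the in-box total variation distance in favour of the explicit side length $r$. Care is needed because, unlike the linear case, the composition of an $M_1$-Lipschitz map with an $M_2$-log-Lipschitz map is again only log-Lipschitz rather than Lipschitz; this is exactly why Lemma~\ref{lem_Hoeffding2} is not recovered by inserting a different exponent into \cite[Lemma~2]{abbas21}, and why the bookkeeping inside the logarithmic factor is the delicate point.
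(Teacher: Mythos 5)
Your proposal is correct and follows essentially the same route as the paper: cover $\cB_{\eps}(\theta^\star)$ with $\cN^{\cB_{\eps}(\theta^\star)}(r)$ pieces, control the in-piece oscillation of $R-R_n$ by chaining the $M_1$-Lipschitz continuity of $\theta\mapsto p(\cdot,\cdot;\theta)$ through the $M_2$-log-Lipschitz modulus of $\ell$ (using monotonicity of $s\mapsto s\log(\ee+1/s)$), choose $r(\xi)$ so that this oscillation is at most $\xi/2$, and finish with Hoeffding's inequality at the centers plus a union bound. Your added check that $r(\xi)$ is well defined is a harmless supplement the paper omits.
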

\begin{proof}
Let $r \in \mathrm{P}(\cX)$ and $q\in \mathrm{P}(\cY)$ denote the observed input and output distributions, respectively. Then using the log-Lipschitz assumption of the loss function, we find
\begin{align}
 &|R(\theta_1) - R(\theta_2) | \nonumber\\
 &\hspace{10mm}= \Big| \E_{r,q}\Big[\ell\big(p(y|x;\theta_1) r(x),q(y)\big)\Big] - \E_{r,q}\Big[\ell\big(p(y|x;\theta_2) r(x),q(y)\big)\Big]\Big| \nonumber \\
 &\hspace{10mm}\leq \E_{r,q}\Big[ \big | \ell\big(p(y|x;\theta_1) r(x),q(y)\big) - \ell\big(p(y|x;\theta_2) r(x),q(y)\big) \big | \Big]\nonumber \\
 &\hspace{10mm}\leq  M_2 \E_{r}\big[ \norm{p(y|x;\theta_1) r(x) \!-\! p(y|x;\theta_2) r(x)}_{1} \log\left(\ee + \frac{1}{\norm{p(y|x;\theta_1) r(x) \!-\! p(y|x;\theta_2) r(x)}_{1}} \right) \big]\nonumber \\
 &\hspace{10mm}\leq  M_2 \norm{p(y|x;\theta_1) - p(y|x;\theta_2) }_{\infty} \log\left(\ee + \frac{1}{\norm{p(y|x;\theta_1)- p(y|x;\theta_2)}_{\infty}} \right) \nonumber\\ 
 &\hspace{10mm} \leq M_2  M_1 \norm{\theta_1 - \theta_2}_{\infty}\log\left(\ee + \frac{1}{\norm{\theta_1- \theta_2}_{\infty}} \right) \, , \label{eq_logLip1}
\end{align}
where the penultimate step uses that $\R_+ \ni x \mapsto x \log(\ee+1/x)$ is monotone together with
H\"older's inequality. The final step follows from the Lipschitz continuity assumption of the model.
Equivalently we see that
\begin{align} \label{eq_logLip2}
   |R_n(\theta_1) - R_n(\theta_2) |
   \leq M_2  M_1 \norm{\theta_1 - \theta_2}_{\infty}\log\left(\ee + \frac{1}{\norm{\theta_1- \theta_2}_{\infty}} \right) \, .
\end{align}
Combining~\eqref{eq_logLip1} with~\eqref{eq_logLip2} gives for $S(\theta) := R(\theta) - R_n(\theta)$
\begin{align} \label{eq_logLip_main_1}
    |S(\theta_1) - S(\theta_2)| \leq 2 M_1 M_2 \norm{\theta_1 - \theta_2}_{\infty} \log\left(\ee + \frac{1}{\norm{\theta_1- \theta_2}_{\infty}} \right) \, .
\end{align}

Assume that $\cB_{\eps}(\theta^\star)$ can be covered by $k$ subsets $B_1, \ldots , B_k$, i.e.~$\cB_{\eps}(\theta^\star) = B_1 \cup \ldots \cup B_k$. Then, for any $\xi > 0$,
\begin{equation}
\mathbb{P}\left( \sup_{\theta \in \cB_{\eps}(\theta^\star)} | S(\theta) | \geq \xi \right) 
= \mathbb{P}\left( \bigcup_{i=1}^k \sup_{\theta \in B_i} | S(\theta) | \geq \xi \right) 
\leq \sum_{i=1}^k \mathbb{P}\left( \sup_{\theta \in B_i} | S(\theta) | \geq \xi \right)\, ,
\label{eq:bound_sup_L}
\end{equation}
where the inequality is due to the union bound.

Finally, let $k = \mathcal N (r)$ and let $B_1, \ldots , B_k$ be balls of radius $r$ centered at $\theta_1, \ldots , \theta_k$ covering $\cB_{\eps}(\theta^\star)$. 
Recalling that by assumption $r=r(\xi)$ is such that $2M_1M_2 r \log(\ee+\frac{1}{M_2r})=\xi/2$ we find for all $i=1,\ldots,k$
\begin{align} \label{eq_step_middle}
\mathbb{P}\left( \sup_{\theta \in B_i} | S(\theta) | \geq \xi \right) 
\leq \mathbb{P}\left( | S(\theta_i) | \geq \frac{\xi}{2} \right) \, .
\end{align}
To see this recall that since $|\theta-\theta_i|\leq r$, by definition of $r$ and using the monotonicity of $x\mapsto x\log(\ee+1/x)$, Inequality~\eqref{eq_logLip_main_1} implies $|S(\theta) - S(\theta_i) | \leq \xi/2$. Hence, if $| S(\theta) | \geq \xi$, it must be that $| S(\theta_i) | \geq \frac{\xi}{2}$. This in turn implies~\eqref{eq_step_middle}.

To conclude, we apply Hoeffding's inequality, which yields
\begin{align}
\mathbb{P}\left( |S(\theta_i) | \geq \frac{\xi}{2} \right)
=\mathbb{P}\left( |R(\theta_i) - R_n(\theta_i)| \geq \frac{\xi}{2} \right) 
\leq 2 \exp \left( \frac{-n \xi^2}{2B^2} \right) \, . \label{eq_hoeffding}
\end{align}
Combined with~\eqref{eq:bound_sup_L}, we obtain
\begin{align*}
\mathbb{P}\left( \sup_{\theta \in \cB_{\eps}(\theta^\star)} | S(\theta) | \geq \xi \right) 
\leq \sum_{i=1}^k \mathbb{P}\left( \sup_{\theta \in B_i} | S(\theta) | \geq \xi \right) 
\leq \sum_{i=1}^k \mathbb{P}\left( | S(\theta_i) | \geq \frac{\xi}{2} \right) 
\leq 2 \cN(r) \exp \left( \frac{-n \xi^2}{2B^2} \right) \, ,
\end{align*}
where the second step uses~\eqref{eq_step_middle}. The final step follows from~\eqref{eq_hoeffding} and by recalling that $k = \cN(r)$.   
\end{proof}
\begin{proof}[Proof of Theorem~\ref{thm_generalization_bound_logLip}]
Choosing $r=\eps/\sqrt{\kappa_{n,\gamma}}$ implies 
\begin{align} \label{eq_xi_logLip}
\xi=\frac{2M_1M_2 \eps}{\sqrt{\kappa_{n,\gamma}}} \log\left(\ee+\frac{\sqrt{\kappa_{n,\gamma}}}{M_2\eps}\right)    
\end{align}
via the relation between $r$ and $\xi$ given in Lemma~\ref{lem_Hoeffding2}.
Hence Lemma~\ref{lem_Hoeffding2} implies
\begin{align*}
\mathbb{P}\bigg( \sup_{\theta \in \cB_{\eps}(\theta^\star)} |R(\theta) - R_n(\theta)| \geq \xi \bigg) 
&\leq 2\, \mathcal N^{\cB_{\eps}(\theta^\star)}\!\left(\frac{\eps}{\sqrt{\kappa_{n,\gamma}}} \right) \exp\left(-\frac{n \xi^2}{2B^2}\right) \\
& \leq 2c_{d}(1+\eps \Lambda)^d \cdot  \kappa_{n,\gamma}^{\frac{d_{n,\gamma,\eps}}{2}}\exp\left(-\frac{n \xi^2}{2B^2}\right) \\
&=2c_{d}(1+\eps \Lambda)^d \cdot  \kappa_{n,\gamma}^{\frac{d_{n,\gamma,\eps}}{2}}\exp\left(-\frac{2nM^2 \eps^2}{\kappa_{n,\gamma} B^2} \Big(\log\Big( \ee + \frac{\sqrt{\kappa_{n,\gamma}}}{M_2 \eps} \Big) \Big)^2  \right) \, ,
\end{align*}
where the second step uses Lemma~\ref{lem_coveringED} and the final step follows from~\eqref{eq_xi_logLip}.
\end{proof}

\section{Experimental setup}\label{experimental_setup}
Here, we explain the models and techniques used for the experiments in this study. All models constituted fully-connected feedforward neural networks with leaky relu activation functions. We used 2 hidden layers for all architectures, but varied the number of neurons per layer depending on the experiment. Training was done with stochastic gradient descent, with batch sizes equal to $50$. In the instances where the CIFAR10 data set was used, we performed a standard transformation of the data by normalizing and cropping from the center. For more details on this transformation, see~\cite{zhang2017understanding}. The full implementation of the code can be found in~\cite{abbas2021code}.

\subsection{Increasing model size}
In Figures~\ref{fig:sub1} and~\ref{fig:sub3} we train feedforward neural networks on the MNIST and CIFAR10 data sets respectively. In both cases, we plot the model size on the x-axis and incrementally increase the number of neurons in both hidden layers, thereby increasing the number of parameters in the model. For MNIST, we do not need to train very large models to achieve zero training error, thus, we vary the number of parameters from $2 \times 10^4$ to $10^5$. On the other hand, for CIFAR10, we train models with parameters ranging from $5 \times 10^5$ to $10^7$. In both data sets, the training and test split is $60000$ and $20000$ images respectively.

We train every model for $200$ epochs and plot the resulting generalization errors, approximated by the test error. We also plot the normalized local effective dimension for every model using the trained parameter set $\theta^*$. For this, we use $n = 6 \times 10^4$ (which is the size of both training sets) and set $\gamma = 1$. 

In both Figures, we repeat the entire experiment $10$ times with different parameter initialization and plot the average generalization error and average normalized local effective dimension over these $10$ trials, with error bars depicting $\pm 1$ standard deviation above and below the mean values. As expected, the local effective dimension declines along with the generalization error in both shallow (MNIST) and deep (CIFAR10) regimes.

\subsection{Randomization experiment}
In Figures~\ref{fig:sub2} and~\ref{fig:sub4} we train models on the MNIST and CIFAR10 data sets respectively. In this experiment, both models are fixed to $d \approx 10^5$ for MNIST and $d \approx 10^7$ for CIFAR10. What we vary is the proportion of training labels that are replaced with random labels (as originally done in~\cite{zhang2017understanding}). We begin by randomizing $20\%$ of the training labels as shown on the x-axis. We train the models to zero training error or terminate after 600 epochs and plot the resulting generalization error and effective dimension. Thereafter, we increase the proportion of random labels in increments of $20\%$, until $100\%$ randomization and plot the generalization error and normalized local effective dimension after training, each time. 

This entire process is repeated $10$ times with different parameter initialization and we plot the average generalization error and average normalized local effective dimension over increasing label randomization. Unsurprisingly, the generalization error increases as we increase the level of randomization since the network is essentially learning to fit more and more noise and thus, does not generalize well. Interestingly, the local effective dimension moves in line with this trend and increases over increasing randomization too. This could be interpreted as the network requiring more and more parameters to forcefully fit the increasing noise levels that would not naturally occur. Thus, the local effective dimension captures the correct generalization behaviour, even in this artificial set up where most capacity measures fail to explain generalization performance. 

\subsection{Estimating the local effective dimension} \label{app_eff_FIM_calculation}

In all calculations involving the local effective dimension, there are two assumptions made. First, we use a fixed parameter set $\theta^*$ chosen after training to estimate the local effective dimension and assume it is a good approximation of the average of sampling in an $\eps$-ball around $\theta^*$. In other words, we ignore the integral over $\mathcal{B}_{\eps}(\theta^\star)$ in Definition~\ref{def_local_ED} and simply use the trained parameter set to compute the local effective dimension. In Table~\ref{tab_sampling} we check the sensitivity of the local effective dimension by comparing this ``midpoint'' approximation to sampling in an $\eps$-ball around $\theta^*$ and conclude that the approximation is sufficiently close and thus helps reduce computational time. We use the more efficient reformulation from Remark~\ref{rmk_comp} to calculate the local effective dimension with multiple samples and take $\eps = 1/\sqrt{n}$.

\begin{table}[!htb]
\centering
\begin{tabular}{|c|c @{\hskip 3mm} c @{\hskip 3mm} c @{\hskip 3mm} c @{\hskip 3mm} c @{\hskip 3mm} c|}
\hline 
samples & midpoint & $50$ & $100$ & $200$ & $500$ & $1000$ \\ 
$\bar d$ & $0.21815588$ & $0.21815937$ & $0.21816021$ & $0.21815975$ & $0.218159262$ & $0.21815838$ \\ 
 $d_{n,\gamma,\eps}$ & $2189499.62$ & $2189534.68$ & $2189543.14$ & $2189538.53$ & $2189533.61$ & $2189524.72$\\
average of $z(\theta)$ & N/A & $7407283.05$ & $7407306.34$ & $7407286.62$ & $7407213.06$ & $7407150.42$ \\
\hline
\end{tabular}\caption{\textbf{Evaluation of the local effective dimension} for a feedforward neural network trained on CIFAR10 with $d\approx 10^7$. We plot values for the normalized local effective dimension $\bar d$ calculated with increasing samples from an $\eps$-ball around the trained $\theta^*$, where $\eps = 1/\sqrt{n}$. The midpoint approximation uses the single $\theta^*$ after training. For completeness, we include the unnormalized local effective dimension $d_{n, \gamma, \eps}$ and the average of the $z(\theta)$ values generated from each sample as defined in~\eqref{eq_calc_ed}. \label{tab_sampling}}
\end{table} 

The second assumption made is that the Fisher information matrix can be approximated by the empirical Fisher information matrix. From~\cite{articlehessfish}, we acknowledge that this assumption does not always necessarily hold. Thus, the continuity statement from Proposition~\ref{prop_continuity_ED} becomes relevant to ensure errors introduced by the estimate of the empirical Fisher do not strongly propagate in the calculation of the local effective dimension.  

Through the empirical Fisher information assumption, we further exploit work done in~\cite{martens2015optimizing} and use the Kronecker-Factored Approximation (K-FAC) Fisher matrix in the estimation of the effective dimension. The K-FAC Fisher allows us to estimate the eigenvalues of the empirical Fisher information much more efficiently for large models. We slightly extend the PyTorch~\cite{NEURIPS2019_9015} implementation developed in~\cite{george_nngeometry}. We refer the interested reader to~\cite{martens2015optimizing} for more details. The K-FAC estimate constitutes several block matrices which comprise a diagonal block estimate of the empirical Fisher estimate. The block matrices relate to the hidden layers used in a neural network model. Conveniently, these block matrices can be further factorized into a tensor product of two smaller matrices. Thus, to calculate the eigenvalues of the K-FAC Fisher, it suffices to compute the eigenvalues of the block matrices, and thereby take advantage of their tensor decomposition. We extend the PyTorch K-FAC implementation from~\cite{george_nngeometry} to include a function that computes all the eigenvalues. Thereafter, the estimation of the local effective dimension follows from~\eqref{eq_calc_ed}. For details, see the code in~\cite{abbas2021code}.

\bibliographystyle{arxiv_no_month}
\bibliography{notes}

\end{document}